\PassOptionsToPackage{numbers, compress}{natbib}
\documentclass{article}
\usepackage[preprint]{neurips_2026}

\usepackage[utf8]{inputenc} 
\usepackage[T1]{fontenc}    
\usepackage{hyperref}       
\usepackage{url}            
\usepackage{booktabs}       
\usepackage{amsfonts}       
\usepackage{nicefrac}       
\usepackage{microtype}      
\usepackage[table,dvipsnames]{xcolor}         
\usepackage{graphicx}
\usepackage{amsmath}
\usepackage{multirow}
\usepackage{algorithm}
\usepackage{algorithmic}
\usepackage{makecell}

\usepackage{amsthm}

\theoremstyle{plain}
\newtheorem{theorem}{Theorem}[section]
\newtheorem{proposition}{Proposition}

\DeclareMathOperator*{\argmax}{arg\,max}
\DeclareMathOperator*{\argmin}{arg\,min}

\title{When Losses Align: Gradient-Based Composite Loss Weighting for Efficient Pretraining}

\author{Ivan Karpukhin \\
Sber AI Lab\\
\texttt{iakarpukhin@sberbank.ru} \\
\And
Andrey Savchenko \\
Sber AI Lab \\
HSE University \\
ISP RAS Research Center for Trusted Artificial Intelligence\\
\texttt{avsavchenko@hse.ru}
}

\begin{document}

\maketitle
\vspace{-0.2in}
\begin{abstract} 

Modern deep models are often pretrained on large-scale data with missing labels using composite objectives, where the relative weights of multiple loss terms act as hyperparameters. Tuning these weights with random search or Bayesian optimization is computationally expensive, as it requires many independent training runs. To address this, we propose a gradient-based bilevel method that learns pretraining loss weights online by aligning the composite pretraining gradient with a downstream objective. By exploiting the structure of the loss, the method avoids the multiple backward passes typically required by truncated backpropagation through the full model, reducing the overhead of hyperparameter tuning to approximately 30\% above a single training run. We evaluate the approach on event-sequence modeling and self-supervised computer vision, where it matches or improves upon carefully tuned baselines while substantially reducing the cost of hyperparameter tuning compared to random or Bayesian search. The source code is released on GitHub: \url{https://github.com/ivan-chai/aligned-hpo}.

\end{abstract}

\section{Introduction}

Modern deep learning systems are commonly pretrained on large-scale unlabeled data and then adapted to downstream tasks~\cite{devlin2019bert,grill2020byol,babaev2022coles}. This two-stage paradigm is attractive because it encourages the model to learn reusable representations that transfer well, particularly when labeled data are limited. In many pretraining pipelines, however, the objective is not a single loss but a weighted combination of several loss terms~\cite{padhi2021tabgpt,estepa2023all4one,karpukhin2025ht}. The choice of these weights can materially affect representation quality and downstream performance, yet they are often selected manually or via black-box hyperparameter search, thus creating a practical difficulty.

Standard tuning strategies such as grid search, random search, or Bayesian optimization typically require many complete training runs, which becomes increasingly costly as the number of loss terms or training budget grows~\cite{snoek2012bayesian}. At the same time, naive default choices, such as uniform weighting, may fail to reflect the differing contributions of individual losses~\cite{yu2020gradientsurgery}. A more efficient way to tune loss weights is therefore needed, especially in large-scale pretraining settings where repeated full training runs are expensive.


In this work, we study the problem of tuning linear loss weights in composite pretraining objectives through bilevel optimization~\cite{franceschi2018bilevel}. Our goal is to optimize the loss weights such that the resulting pretrained representations improve downstream performance. We derive an SGD-based update rule that aligns the composite pretraining gradient with the downstream gradient. To reduce computational cost, we compute the alignment signal in the shared representation space, avoiding separate full backward passes for each loss term. The resulting method, which we call Gradient-aligned Pretraining (GraP), provides an efficient alternative to search-based hyperparameter optimization for composite objectives. In experiments on event-sequence modeling and self-supervised vision, GraP matches or improves carefully tuned baselines while requiring only a single training run with moderate overhead.


The main contributions of this paper are as follows:
\begin{enumerate}
\item We formulate pretraining loss-weight selection as a bilevel optimization problem targeting downstream performance.

\item We analyze normalization strategies and argue that the constraint should be imposed on the norm of the update vector, rather than on the weights themselves.

\item 
We propose an efficient gradient-based algorithm that updates loss weights with $O(1)$ complexity in the number of forward–backward steps with respect to the number of hyperparameters, by leveraging gradient alignment in the shared representation space.
\item 
We show empirically that the method achieves performance comparable to or better than standard hyperparameter optimization (e.g., Bayesian search) while being nearly $50\times$ more efficient, enabling hyperparameter tuning in settings where traditional methods are computationally infeasible. The proposed method also outperforms popular multi-task learning techniques on average.

\item 
We demonstrate that the proposed approach can identify redundant loss terms in composite objectives and remains effective across both sequential and vision-based pretraining tasks.
\end{enumerate}

\section{Problem formulation}
\label{sec:definition}
\begin{figure}[t]
    \centering
    \vspace{-0.1in}
    \includegraphics{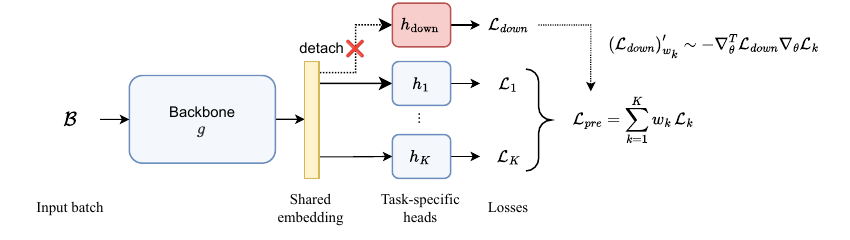}
    \vspace{-0.15in}
    \caption{Overview of the proposed GraP framework. A shared backbone produces embeddings used by multiple pretraining heads and a detached downstream head. Loss weights are updated by aligning the composite pretraining gradient with the downstream gradient in the shared representation space.}
    \label{fig:overview}
\end{figure}

This paper studies the problem of tuning weights in composite pretraining objectives to improve downstream task performance. Let $\mathcal{D} = \{x_i\}_{i=1}^N$ be a pretraining dataset, where a subset $\mathcal{D}_s = \{(x_i, y_i)\}$ contains downstream labels, while the remaining examples are unlabeled. Let $\mathcal{B}$ denote a minibatch sampled from $\mathcal{D}$, and let $\mathcal{B}_s \subseteq \mathcal{B}$ denote its labeled subset. We consider a model consisting of a shared backbone \(z = g(\mathcal{B}, \theta)\), which produces embeddings, and task-specific heads, as illustrated in Fig.~\ref{fig:overview}. Each pretraining objective \(\mathcal{L}_k,\; k = 1, \dots, K,\) is applied through a corresponding head \(h_k(z, \theta_k)\), while the downstream task uses a separate head \(h_\mathrm{down}(z, \theta_\mathrm{down})\) with loss \(\mathcal{L}_\mathrm{down}\). For clarity, we omit the dependence on model parameters in some expressions below.

The model is trained using a composite pretraining objective defined as
\begin{equation}
\mathcal{L}_{\text{pre}}(\mathcal{B}, \mathbf{w}) = \sum_{k=1}^K w_k \, \mathcal{L}_k\big(h_k(g(\mathcal{B}))\big),
\end{equation}
where $\{\mathcal{L}_k\}_{k=1}^K$ are task-specific loss terms and $\mathbf{w} = (w_1, \dots, w_K)$ are their weights. In this paper, we restrict the weights to be non-negative, as negative values correspond to gradient ascent on individual loss terms, which can lead to unstable updates and degradation of learned representations. This constraint is standard in multi-objective and multi-task optimization
~\cite{sener2018mtl}.

Our goal is to choose $\mathbf{w}$ such that the learned representation $z = g(\mathcal{B})$ performs well on the downstream task, as measured by the expected downstream loss
\begin{equation}
\mathcal{L}_{\text{down}}(\mathcal{B}_s) = \mathcal{L}_\mathrm{down}\big(h_\mathrm{down}(g(X)), Y\big),
\end{equation}
where $(X, Y)$ are inputs and labels from the minibatch $\mathcal{B}_s$.

To directly link pretraining with downstream performance, we cast this problem as a bilevel optimization task (we omit expectation over the dataset for simplicity):
\begin{equation}
\mathbf{w} = \argmin\limits_{\mathbf{w}, w_k > 0} \; \mathcal{L}_{\text{down}}(\theta^*(\mathbf{w}), \theta^*_\mathrm{down}(\mathbf{w})),
\label{eq:outer}
\end{equation}
subject to
\begin{equation}
\theta^*(\mathbf{w}), \theta^*_\mathrm{down}(\mathbf{w}) = \argmin\limits_{\theta,\theta_\mathrm{down}} \min\limits_{\theta_1, \dots, \theta_K} \; \left[\sum_{k=1}^K w_k \, \mathcal{L}_k(\theta, \theta_k) + \mathcal{L}_\mathrm{down}(\theta, \theta_\mathrm{down}) \right].
\label{eq:inner}
\end{equation}


This formulation explicitly captures the dependence of downstream performance on the choice of loss weights during pretraining.

\section{Method}
In this section, we first introduce the core idea underlying the proposed method and then present two extensions that improve representation quality and substantially increase computational efficiency.
\subsection{Stochastic gradient descent for pretraining loss weighting}
\label{sec:sgd-update}
We now derive an SGD-based update rule for the loss weights $\mathbf{w}$ under the bilevel formulation introduced in Sec.~\ref{sec:definition}. We consider a model consisting of a shared backbone $g$, pretraining heads $h_k$, and a downstream head $h_{\mathrm{down}}$. The pretraining objective is a weighted combination of $K$ loss terms, where the weights $\mathbf{w}$ are treated as hyperparameters. We assume that the inner optimization in Eq.~\ref{eq:inner} is performed using stochastic gradient descent (SGD). Below, we analyze a single SGD step, corresponding to the simplest form of truncated backpropagation~\cite{shaban2019truncated}. The extension to longer rollouts is discussed in Appendix~\ref{app:rollout}.

Given a minibatch $\mathcal{B}$ and its labeled subset $\mathcal{B}_s = (X, Y)$, we compute the pretraining gradients
\begin{equation}
g_k = \nabla_\theta \mathcal{L}_k\big(h_k(g(\mathcal{B}, \theta))\big), 
\quad k = 1, \dots, K,
\end{equation}
and form the composite pretraining gradient
\begin{equation}
g_{\mathrm{pre}}(\mathbf{w}) = \sum_{k=1}^K w_k g_k,
\end{equation}
which corresponds to the gradient of the weighted pretraining objective $\mathcal{L}_{\mathrm{pre}}$. The model parameters are then updated using SGD with learning rate $\eta$:
\begin{equation}
\theta' = \theta - \eta \, g_{\mathrm{pre}}(\mathbf{w}).
\end{equation}

Since $\theta'$ depends on $\mathbf{w}$, we can compute the gradient of the downstream loss with respect to the weights $\mathbf{w}$:
\begin{equation}
\label{eq:weightgrad}
\frac{\partial \mathcal{L}_{\mathrm{down}}}{\partial w_k}
=
g_{\mathrm{down}}^{\prime \top}
\frac{\partial \theta'}{\partial w_k}
=
-\eta \, g_{\mathrm{down}}^{\prime \top} g_k,
\end{equation}
\begin{equation}
g'_{\mathrm{down}} =
\nabla_{\theta'}
\mathcal{L}_{\mathrm{down}}
\big(h_{\mathrm{down}}(g(X, \theta'), Y)\big).
\end{equation}
Eq.~\ref{eq:weightgrad} shows that the hypergradient, i.e. the gradient of the downstream objective with respect to the loss weights, is proportional to the inner product between the downstream gradient and the $k$-th pretraining gradient. In practice, we assume that $\eta$ is sufficiently small and $g'_\mathrm{down} \approx g_\mathrm{down}$. Therefore, we compute the downstream gradient at the current parameters $\theta$ jointly with the pretraining gradients.

Let $G \in \mathbb{R}^{K \times d}$ denote the matrix whose $k$-th row is $g_k^\top$. Since updating $\mathbf{w}$ by gradient descent on $\mathcal{L}_{\mathrm{down}}$ is equivalent to maximizing the inner products $g_{\mathrm{down}}^\top g_k$, the corresponding local objective can be written as
\begin{equation}
\argmax_{\mathbf{w} \in \mathbb{R}^K, \, w_k \ge 0}
\;
\mathbf{w}^\top G g_{\mathrm{down}}.
\label{eq:opt-nonorm}
\end{equation}
We will use this objective to analyze the properties of the optimal loss weights.

A straightforward implementation of the weights update step computes each gradient $g_k = \nabla_\theta \mathcal{L}_k$ separately, resulting in $K$ full backward passes through the model. The key computational challenge is, therefore, to estimate these alignment scores efficiently without computing $K$ full gradients.

\subsection{Weight normalization and scale ambiguity}
\label{sec:normalization}

The absolute scale of the weights $\mathbf{w}$ directly affects the magnitude of the composite gradient and, consequently, the effective learning rate of the SGD update. Since scaling $\mathbf{w}$ by a constant rescales the update magnitude without changing the relative contributions of individual losses, the objective in Eq.~\ref{eq:opt-nonorm} exhibits scale ambiguity and therefore requires additional constraints on $\mathbf{w}$.

A common approach is to impose normalization constraints on the weights, such as enforcing $\sum_k w_k = 1$. This corresponds to the standard linear scalarization used in multi-task learning~\cite{sener2018mtl}, where weights control the relative contribution of different loss terms while constraining the overall magnitude of the weighted combination. Another option is to constrain the weights to have unit norm, e.g., $\|\mathbf{w}\| = 1$. However, such normalizations may lead to suboptimal composite-gradient directions, as illustrated in Fig.~\ref{fig:normalization}.

\begin{figure}[t]
    \centering
    \includegraphics{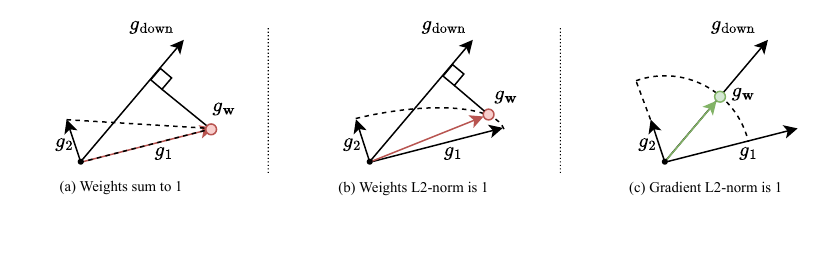}
    \vspace{-0.6in}
    \caption{Comparison of normalization strategies. The objective is to maximize the alignment between the composite gradient $g_\mathbf{w} = w_1 g_1 + w_2 g_2$ and the downstream gradient $g_{\mathrm{down}}$. Colored circles indicate the optimal solutions under different normalization schemes. Weight-sum and weight-norm normalization can distort the optimal composite-gradient direction, while composite-gradient normalization preserves the optimal alignment direction.}
    \label{fig:normalization}
\end{figure}

To address this issue, we constrain the $\ell_2$-norm of the composite gradient:
\begin{equation}
\left\| \sum_{k=1}^K w_k g_k \right\| = \left\| \mathbf{w}^\top G \right\| = 1.
\end{equation}
As shown in Appendix~\ref{app:normalization}, this scheme recovers the optimal alignment direction under the local objective from Eq.~\ref{eq:opt-nonorm}. The corresponding weights optimization objective has the following form:
\begin{equation}
\argmax_{\mathbf{w} \in \mathbb{R}^K, \, w_k \ge 0}
\; \frac{\mathbf{w}^\top G g_\mathrm{down}}{\left\| \mathbf{w}^\top G \right\|}
.
\label{eq:opt-norm}
\end{equation}


\subsection{Connection to $\ell_2$ Distance and Upper-Bound Approximation}
\label{sec:shared-space}

We first reinterpret the optimization objective from Eq.~\ref{eq:opt-norm} as a distance minimization problem:
\begin{equation}
\left\|
\frac{\mathbf{w}^\top G}{\left\| \mathbf{w}^\top G \right\|}
-
g_\mathrm{down}
\right\|^2
=
1 + \|g_\mathrm{down}\|^2
-
2
\frac{\mathbf{w}^\top G g_\mathrm{down}}
{\left\| \mathbf{w}^\top G \right\|}.
\end{equation}
Since \(g_\mathrm{down}\) does not depend on \(\mathbf{w}\), maximizing Eq.~\ref{eq:opt-norm} is equivalent to minimizing the distance between the normalized composite gradient and the downstream gradient.

Computing this objective directly requires access to the full parameter-space gradients \(g_k\) for all \(K\) loss terms. Following prior multi-task learning methods that compare gradients in a shared representation space~\cite{sener2018mtl,chen2018gradnorm,senushkin2023independent}, we instead compute gradients with respect to the backbone embedding \(z=g(\mathcal{B})\):
\begin{equation}
\tilde g_k =
\nabla_z \mathcal{L}_k(h_k(z)),
\qquad
\tilde g_\mathrm{down} =
\nabla_z \mathcal{L}_{\mathrm{down}}(h_\mathrm{down}(z)).
\end{equation}
Let \(\tilde G \in \mathbb{R}^{K \times d}\) denote the matrix whose \(k\)-th row is \(\tilde g_k^\top\). This leads to the following embedding-space objective, used in the proposed GraP method:
\begin{equation}
\argmax_{\mathbf{w} \in \mathbb{R}^K, \, w_k \ge 0}
\frac{\mathbf{w}^\top \tilde G \tilde g_\mathrm{down}}
{\left\| \mathbf{w}^\top \tilde G \right\|}.
\label{eq:grap-shared}
\end{equation}

The transition from parameter-space gradients to embedding-space gradients, as well as the replacement of the normalization term from Sec.~\ref{sec:normalization}, introduces additional assumptions and approximations. A detailed theoretical analysis, including upper-bound interpretation of the resulting objective, is provided in Appendix~\ref{app:embedding-space}.

Overall, the proposed approximation substantially reduces computational cost. Instead of computing \(K\) full backward passes through the backbone, we compute \(K\) gradients only from the losses to the shared embedding. Since the losses are applied through separate fully connected heads on top of the same embedding, these head-level backward passes are substantially cheaper than full backward passes through the backbone.

\subsection{Final algorithm}
We now summarize the resulting procedure in Alg.~\ref{alg:ntp}. At each iteration, we perform a forward pass through the backbone and all heads, then compute gradients of all loss terms with respect to the shared embedding \(z\). These gradients are used to update the weights \(\mathbf{w}\) based on their alignment with the downstream gradient. We then form the composite embedding-space gradient \(\mathbf{w}^\top \tilde G\) using the weights (before the update) and backpropagate it through the backbone. Finally, all model parameters are updated via SGD. Importantly, the gradients \(\tilde g_k\) are reused both for the weight update and the backbone update, eliminating the need for an additional backward pass through the combined loss.

To ensure stable training, each head is updated using its own unweighted gradient, allowing all heads to continue learning even when some weights \(w_k\) become zero. In contrast, the downstream gradient \(\tilde g_{\mathrm{down}}\) is used only to update the weights \(\mathbf{w}\) and the downstream head parameters \(\theta_{\mathrm{down}}\), and is not propagated through the backbone, since the pretraining stage remains unsupervised.

Overall, the proposed method requires only a single forward-backward pass through the model to update both the model parameters and the loss weights.

\begin{algorithm}[h!]
\caption{GraP algorithm for Loss Weight Tuning}
\label{alg:ntp}
\begin{algorithmic}[1]
\STATE $w_k \leftarrow 1, k=\overline{1,K}$ \hfill \COMMENT{Weights initialization}
\FOR{each minibatch $\mathcal{B}$}
    \STATE $z \leftarrow g(\mathcal{B})$ \hfill \COMMENT{embedding}
    \STATE $o_k \leftarrow h_k(z), \;\; o_\mathrm{down} \leftarrow h_\mathrm{down}(z)$ \hfill \COMMENT{head outputs}
    \STATE $\tilde g_k \leftarrow \nabla_z \mathcal{L}_k(o_k), \;\; \tilde g_\mathrm{down} \leftarrow \nabla_z \mathcal{L}_{\mathrm{down}}(o_\mathrm{down})$ \hfill \COMMENT{gradients}
    \STATE $\bar{\mathbf{w}} \leftarrow \mathbf{w} \,/\, \left\|\sum_k w_k \tilde g_k\right\|$ \hfill \COMMENT{composite gradient normalization}
    \STATE $\mathbf{w} \leftarrow \mathbf{w} + \eta \, \nabla_{\mathbf{w}} \left( \bar{\mathbf{w}}^\top \tilde G \, \tilde g_\mathrm{down} \right)$ \hfill \COMMENT{SGD step for weights}
    \STATE $\theta \leftarrow \theta - \eta \, \nabla_\theta \langle z, \bar{\mathbf{w}}^\top \tilde G \rangle$ \hfill \COMMENT{propagate $\bar{\mathbf{w}}^\top \tilde G$ to the backbone and update}
    \STATE update each $h_k$ using $\nabla_{\theta_k}\mathcal{L}_k(o_k)$ and $h_\mathrm{down}$ using $\nabla_{\theta_\mathrm{down}}\mathcal{L}_\mathrm{down}(o_\mathrm{down})$ \hfill
\ENDFOR
\end{algorithmic}
\end{algorithm}

\section{Experiments}
We conduct experiments in two domains: event sequences and computer vision. In both settings, we compare against a simple equal-weights baseline as well as several multi-task learning methods, including GradNorm~\cite{chen2018gradnorm}, Dynamic Weight Averaging (DWA)~\cite{liu2019dwa} MGDA~\cite{sener2018mtl}, and PCGrad~\cite{yu2020gradientsurgery}. We do not include uncertainty-based weighting methods~\cite{kendall2017uncertainty}, as they rely on task-specific probabilistic likelihood formulations that are not straightforward to define consistently across the heterogeneous mix of classification and regression objectives in our event-sequence setting. We follow the standard training pipelines provided by the benchmarks, with additional details included in Appendix~\ref{app:training-details}.

\subsection{Event sequences}
\textbf{Experiment design.}
We consider five real-world datasets of event sequences: \textit{Churn}, \textit{AgePred}, \textit{AlfaBattle}, \textit{MIMIC-III}, and \textit{Taobao}. We follow the standard preprocessing, pretraining, and evaluation protocol for event sequence modeling introduced in prior work~\cite{karpukhin2025ht}. These datasets cover a range of downstream tasks, including churn prediction, age group classification, credit default prediction, mortality prediction, and user activity prediction, as outlined in Appendix~\ref{app:training-details}.

Each dataset consists of event sequences with heterogeneous features, where each event contains multiple fields (typically 3--18). We formulate next-token prediction (NTP) as a composite objective combining cross-entropy losses for categorical features and mean absolute error for continuous values. This naturally yields a large number of loss components, making the setting suitable for evaluating loss-weight tuning methods. We additionally include a contrastive loss~\cite{babaev2022coles}, resulting in the following pretraining objective:
\begin{equation}
\mathcal{L}^\mathrm{seq}_\mathrm{pre} =
\sum\limits_{i=1}^F
w_i \mathcal{L}_{\mathrm{NTP}_i}
+
w_c \mathcal{L}_\mathrm{Contrastive},
\end{equation}
where \(F\) denotes the number of data fields.
During pretraining, we also tune a fully connected downstream classification head using cross-entropy loss. This head is used for weight tuning. Final model quality is evaluated by training a downstream classifier on top of frozen embeddings, using gradient boosting models as in prior work~\cite{sakhno2025pytorch}.

\textbf{Results.}
Table~\ref{tab:results} reports results on five event sequence datasets. The supervised RNN underperforms compared to unsupervised pretraining with equal weights, highlighting the benefits of pretraining when labeled data is limited. Overall, the proposed method improves average performance across both RNN and Transformer backbones compared to the equal-weights baseline and the evaluated multi-task learning methods. Notably, it consistently achieves the best results on the Churn and MIMIC-III datasets. Among the baselines, GradNorm~\cite{chen2018gradnorm} provides the strongest performance. For the full trajectories of the weights during training, please refer to Appendix~\ref{app:trajectories}.

\begin{table*}[h!]
\centering
\resizebox{\linewidth}{!}{
\begin{tabular}{l|ccccc|c}
    Method & Churn & AgePred & Alfabattle & MIMIC-III & Taobao & AVG \\ 
    \hline
    
\rowcolor{gray!10}Supervised RNN & 79.43{\tiny $\pm$ 0.64} & 61.48{\tiny $\pm$ 0.29} & {78.87{\tiny $\pm$ 0.18}} & {91.56{\tiny $\pm$ 0.24}} & {69.57{\tiny $\pm$ 0.20}} & 76.18\\
\hline
\hline
\rowcolor{BlueGreen!10}RNN Equal Weights & \bf 83.79{\tiny $\pm$ 0.79} & 63.76{\tiny $\pm$ 0.41} & 78.57{\tiny $\pm$ 0.00} & 91.25{\tiny $\pm$ 0.05} & 70.20{\tiny $\pm$ 0.49} & 77.51\\
\rowcolor{BlueGreen!10}RNN GradNorm & 83.41{\tiny $\pm$ 0.41} & {\bf 64.80{\tiny $\pm$ 0.32}} & 79.75{\tiny $\pm$ 0.28} & 90.72{\tiny $\pm$ 0.11} & 70.12{\tiny $\pm$ 0.31} & 77.76\\
\rowcolor{BlueGreen!10}RNN DWA & 83.74{\tiny $\pm$ 0.69} & 64.58{\tiny $\pm$ 0.18} & 78.49{\tiny $\pm$ 0.03} & 91.19{\tiny $\pm$ 0.07} & 70.27{\tiny $\pm$ 0.26} & 77.66\\
\rowcolor{BlueGreen!10}RNN MGDA & 77.79{\tiny $\pm$ 0.43} & 63.47{\tiny $\pm$ 0.27} & 78.66{\tiny $\pm$ 0.00} & 88.84{\tiny $\pm$ 0.23} & {70.42{\tiny $\pm$ 0.15}} & 75.84\\
\rowcolor{BlueGreen!10}RNN PCGrad & 83.77{\tiny $\pm$ 0.67} & 64.10{\tiny $\pm$ 0.29} & 78.55{\tiny $\pm$ 0.09} & 91.20{\tiny $\pm$ 0.10} & 70.23{\tiny $\pm$ 0.28} & 77.57\\
\hline
\rowcolor{BlueGreen!10}RNN GraP (Our) & 83.54{\tiny $\pm$ 0.71} & 64.31{\tiny $\pm$ 0.08} & {\bf 79.82{\tiny $\pm$ 0.07}} & {\bf 91.44{\tiny $\pm$ 0.05}} & {\bf 70.56{\tiny $\pm$ 0.14}} & \bf 77.94\\
\hline
\hline
\rowcolor{Apricot!10}Transformer Equal Weights & \bf 83.50{\tiny $\pm$ 0.95} & 61.27{\tiny $\pm$ 0.60} & 75.01{\tiny $\pm$ 0.05} & 89.37{\tiny $\pm$ 0.05} & 67.83{\tiny $\pm$ 0.55} & 75.39\\
\rowcolor{Apricot!10}Transformer GradNorm & 82.92{\tiny $\pm$ 0.65} & 62.31{\tiny $\pm$ 0.36} & {\bf 78.56{\tiny $\pm$ 0.06}} & 87.12{\tiny $\pm$ 1.72} & {69.80{\tiny $\pm$ 0.26}} & 76.14\\
\rowcolor{Apricot!10}Transformer DWA & 83.42{\tiny $\pm$ 0.58} & 58.82{\tiny $\pm$ 0.25} & 75.85{\tiny $\pm$ 0.21} & 89.01{\tiny $\pm$ 0.20} & 67.87{\tiny $\pm$ 0.50} & 74.99\\
\rowcolor{Apricot!10}Transformer MGDA & 82.24{\tiny $\pm$ 0.39} & 60.38{\tiny $\pm$ 0.22} & 77.03{\tiny $\pm$ 0.03} & 83.58{\tiny $\pm$ 0.35} & 69.72{\tiny $\pm$ 0.33} & 74.59\\
\rowcolor{Apricot!10}Transformer PCGrad & 82.87{\tiny $\pm$ 0.73} & 58.57{\tiny $\pm$ 0.32} & 75.79{\tiny $\pm$ 0.05} & 88.95{\tiny $\pm$ 0.15} & 68.05{\tiny $\pm$ 0.44} & 74.85\\
\hline
\rowcolor{Apricot!10}Transformer GraP (Our) & 83.45{\tiny $\pm$ 0.60} & {\bf 62.62{\tiny $\pm$ 0.25}} & 78.13{\tiny $\pm$ 0.07} & {\bf 91.28{\tiny $\pm$ 0.09}} & {\bf 69.90{\tiny $\pm$ 0.35}} & \bf 77.08\\
\hline
\hline
\rowcolor{RedOrange!10}HT-Transformer Equal Weights & 83.96{\tiny $\pm$ 0.65} & 54.76{\tiny $\pm$ 1.15} & 76.65{\tiny $\pm$ 0.22} & 87.39{\tiny $\pm$ 0.59} & 68.62{\tiny $\pm$ 0.19} & 74.28\\
\rowcolor{RedOrange!10}HT-Transformer GradNorm & 84.45{\tiny $\pm$ 0.36} & 62.24{\tiny $\pm$ 0.45} & {\bf 79.55{\tiny $\pm$ 0.01}} & 84.48{\tiny $\pm$ 0.70} & 69.83{\tiny $\pm$ 0.00} & 76.11\\
\rowcolor{RedOrange!10}HT-Transformer DWA & 84.11{\tiny $\pm$ 0.48} & 59.06{\tiny $\pm$ 0.78} & 76.84{\tiny $\pm$ 0.22} & 81.32{\tiny $\pm$ 0.21} & 68.28{\tiny $\pm$ 0.58} & 73.92\\
\rowcolor{RedOrange!10}HT-Transformer MGDA & 84.56{\tiny $\pm$ 0.40} & 60.17{\tiny $\pm$ 0.13} & 78.78{\tiny $\pm$ 0.24} & 83.71{\tiny $\pm$ 0.58} & {\bf 70.07{\tiny $\pm$ 0.00}} & 75.46\\
\rowcolor{RedOrange!10}HT-Transformer PCGrad & 84.22{\tiny $\pm$ 0.39} & 58.93{\tiny $\pm$ 0.37} & 76.52{\tiny $\pm$ 0.07} & 82.65{\tiny $\pm$ 2.51} & 68.71{\tiny $\pm$ 0.48} & 74.21\\
\hline
\rowcolor{RedOrange!10}HT-Transformer GraP (Our) & {\bf 85.21{\tiny $\pm$ 0.36}} & {\bf 63.37{\tiny $\pm$ 0.34}} & 79.48{\tiny $\pm$ 0.03} & {\bf 91.29{\tiny $\pm$ 0.03}} & 69.68{\tiny $\pm$ 0.00} & \bf 77.81\\

\end{tabular}
}
\caption{Event sequences results. The mean and standard deviation across 4 seeds are reported. The best method for each dataset-backbone combination is shown in bold.}
\label{tab:results}
\end{table*}

\textbf{Comparison with Optuna.}
In our standard setting, we combine NTP and contrastive objectives. Recent work provides Optuna-tuned weights for the NTP objective only~\cite{karpukhin2025ht}. In Table~\ref{tab:optuna}, we compare GraP with weights obtained via Optuna’s Bayesian optimization~\cite{akiba2019optuna}. The results show that our method achieves performance on par with weights tuned using Bayesian search.

\begin{table*}[h!]
\centering
\resizebox{\linewidth}{!}{
\begin{tabular}{l|ccccc|c}
    Method & Churn & AgePred & Alfabattle & MIMIC-III & Taobao & AVG \\ 
    \hline
    
RNN Optuna & 82.09{\tiny $\pm$ 0.92} & \bf 63.85{\tiny $\pm$ 0.41} & 80.64{\tiny $\pm$ 0.13} & \bf 91.29{\tiny $\pm$ 0.20} & \bf 69.26{\tiny $\pm$ 0.19} & \bf 77.43\\
RNN GraP (Our) & \bf 82.21{\tiny $\pm$ 0.55} & 63.83{\tiny $\pm$ 0.63} & \bf 80.72{\tiny $\pm$ 0.03} & 90.86{\tiny $\pm$ 0.21} & 69.19{\tiny $\pm$ 0.51} & 77.36\\
\hline
Transformer Optuna & 83.27{\tiny $\pm$ 0.62} & 63.93{\tiny $\pm$ 0.58} & 78.44{\tiny $\pm$ 0.14} & \bf 90.79{\tiny $\pm$ 0.07} & 68.94{\tiny $\pm$ 0.28} & 77.07\\
Transformer GraP (Our) & \bf 83.83{\tiny $\pm$ 0.90} & {\bf \bf 64.18{\tiny $\pm$ 0.35}} & \bf 78.62{\tiny $\pm$ 0.06} & 90.68{\tiny $\pm$ 0.38} & \bf 69.70{\tiny $\pm$ 0.28} & \bf 77.40\\
\hline
HT-Transformer Optuna & 84.57{\tiny $\pm$ 0.21} & 62.21{\tiny $\pm$ 0.61} & {\bf 81.09{\tiny $\pm$ 0.06}} & {\bf 91.76{\tiny $\pm$ 0.07}} & 70.25{\tiny $\pm$ 0.04} & 77.97\\
HT-Transformer GraP (Our) & {\bf 85.07{\tiny $\pm$ 0.39}} & \bf 62.31{\tiny $\pm$ 0.76} & 80.93{\tiny $\pm$ 0.13} & 91.69{\tiny $\pm$ 0.09} & {\bf 70.78{\tiny $\pm$ 0.20}} & \bf 78.16\\

\end{tabular}
}
\caption{Event sequences results without contrastive loss. The mean and standard deviation across 4 seeds are reported. The best method for each dataset-backbone combination is shown in bold.}
\label{tab:optuna}
\end{table*}

\subsection{Computer vision}
\textbf{Experiment design.}
We evaluate our method in the computer vision domain using the \texttt{solo-learn}~\cite{da2022solo}, a standardized framework for self-supervised representation learning. We consider experiments on \textit{CIFAR-10}, \textit{CIFAR-100}~\cite{krizhevsky2009cifar}, and \textit{ImageNet-100}, a subset of ImageNet~\cite{krizhevsky2012imagenetcls}. We apply our approach to \textit{All4One}~\cite{estepa2023all4one}, a recent self-supervised method that combines multiple objectives inspired by Barlow Twins~\cite{zbontar2021barlow}, BYOL~\cite{grill2020byol}, and NNCLR~\cite{dwibedi2021nnclr}. Model quality is evaluated following the benchmark protocol via online training of a linear classifier on top of detached representations, reporting validation top-1 accuracy. 
In this setting, we observed that weight perturbations at early training stages can affect the final performance. To address this, we report a \textit{GraP Tuned} variant, where All4One is trained from scratch using median weights computed by our method.

\textbf{Results.}
Table~\ref{tab:results-ntp-hpo-cv} summarizes the results. Overall, all methods demonstrate similar performance across datasets, indicating that this setting is relatively insensitive to the exact choice of loss weights. The best-performing method varies across datasets: GradNorm achieves the highest average score, DWA performs best on ImageNet-100, and GraP-tuned weights achieve the best result on CIFAR-10. However, these differences are typically small compared to the standard deviation. Notably, our method achieves performance comparable to the default All4One parameter set without requiring manual tuning or an expensive hyperparameter search. 
Interestingly, our method assigns a near-zero weight to the Attn-NNCLR loss. Removing this loss (All4One W/O Attn-NNCLR) yields performance similar to the default model, suggesting that this objective has a limited impact in this setting. This behavior indicates that the proposed method can identify redundant or low-impact losses and, more generally, serve as an automatic loss-selection mechanism. 
Overall, these results indicate that in standard self-supervised vision settings, our approach provides a reliable alternative to hyperparameter tuning and multi-task learning methods.

\begin{table*}[h!]
\centering
\resizebox{.8\linewidth}{!}{
\begin{tabular}{l|ccc|c}
    Method & CIFAR-10 & CIFAR-100 & ImageNet-100 & AVG \\ 
    \hline

All4One (default) & 93.08{\tiny $\pm$ 0.12} & 71.48{\tiny $\pm$ 0.16} & {85.92{\tiny $\pm$ 0.03}} & 83.49\\
All4One W/O Attn-NNCLR & 93.11{\tiny $\pm$ 0.10} & 71.32{\tiny $\pm$ 0.47} & 85.65{\tiny $\pm$ 0.06} & 83.36\\

\hline

All4One Equal Weights & 92.70{\tiny $\pm$ 0.16} & 71.45{\tiny $\pm$ 0.14} & 86.08{\tiny $\pm$ 0.10} & 83.41\\
All4One GradNorm & 93.01{\tiny $\pm$ 0.04} & {\bf 71.81{\tiny $\pm$ 0.23}} & 85.81{\tiny $\pm$ 0.07} & \bf 83.54\\
All4One DWA & 92.72{\tiny $\pm$ 0.11} & 71.44{\tiny $\pm$ 0.34} & {\bf 86.31{\tiny $\pm$ 0.05}} & 83.49\\
All4One MGDA & 92.81{\tiny $\pm$ 0.11} & 69.98{\tiny $\pm$ 0.19} & 83.79{\tiny $\pm$ 0.03} & 82.19\\
\hline
All4One GraP & 92.93{\tiny $\pm$ 0.17} & 70.90{\tiny $\pm$ 0.06} & 84.18{\tiny $\pm$ 0.12} & 82.67\\
All4One GraP Tuned & {\bf 93.16{\tiny $\pm$ 0.09}} & 71.51{\tiny $\pm$ 0.35} & 85.81{\tiny $\pm$ 0.08} & 83.49\\

\end{tabular}
}
\caption{Image classification results. The mean and standard deviation across 4 seeds are reported. The best method for each dataset is shown in bold.}
\label{tab:results-ntp-hpo-cv}
\end{table*}

\subsection{Ablation studies}
\textbf{On potential label leakage.}
Our method updates the loss weights using labeled data, raising the question of whether this introduces label leakage and effectively makes backbone training partially supervised. We address this concern in several ways. First, the weights influence the backbone only indirectly through the composite gradient. The backbone and weight updates are computed simultaneously, while the weights evolve gradually, typically following smooth and monotonic trajectories (see Appendix~\ref{app:trajectories}). As a result, the weights cannot adapt to individual minibatches, limiting the transfer of label-specific information to the backbone. Second, we evaluate the effect of the amount of labeled data used for weight tuning on the Churn and MIMIC-III datasets. Table~\ref{tab:label_fraction} shows that the method is robust to the fraction of labeled samples, suggesting that labels primarily provide a weak guidance signal for weight selection rather than directly supervising representation learning. Finally, the supervised fine-tuning results in Table~\ref{tab:aux} demonstrate that supervision still provides additional information not captured during pretraining. In this sense, the proposed method is comparable to standard hyperparameter optimization in terms of supervision from labels.


\begin{table*}[h!]
\centering
\begin{tabular}{l|ccccc}
    Dataset & 5\% & 10\% & 20\% & 50\% & 100\% \\
    \hline
    Churn & 83.61{\tiny $\pm$ 0.48} & 83.32{\tiny $\pm$ 0.12} & 84.80{\tiny $\pm$ 0.15} & 84.00{\tiny $\pm$ 0.70} & 84.45{\tiny $\pm$ 0.55} \\
    MIMIC-III & 91.36{\tiny $\pm$ 0.02} & 91.74{\tiny $\pm$ 0.04} & 91.58{\tiny $\pm$ 0.14} & 91.58{\tiny $\pm$ 0.06} & 91.50{\tiny $\pm$ 0.05} \\
\end{tabular}
\caption{Classification results on the Churn dataset for various fractions of labeled data.}
\label{tab:label_fraction}
\end{table*}

\textbf{Weights normalization.}
In Sec.~\ref{sec:normalization}, we introduced a normalization strategy based on the norm of the weighted composite gradient. Table~\ref{tab:normalization} compares the proposed approach with standard weight-sum normalization~\cite{sener2018mtl} and a variant with unconstrained weights. The proposed normalization improves average performance across datasets, with the largest gains observed in settings with relatively low variance across runs.
\begin{table*}[h!]
\centering
\begin{tabular}{l|ccccc|c}
    Method & Churn & AgePred & Alfabattle & MIMIC-III & Taobao & AVG \\ 
    \hline
GraP No norm. & {\bf 84.26{\tiny $\pm$ 0.55}} & 63.86{\tiny $\pm$ 0.41} & 78.39{\tiny $\pm$ 0.05} & 91.27{\tiny $\pm$ 0.11} & 69.65{\tiny $\pm$ 0.14} & 77.48\\
GraP Sum norm. & 83.80{\tiny $\pm$ 0.58} & 63.43{\tiny $\pm$ 0.61} & 79.12{\tiny $\pm$ 0.12} & 91.25{\tiny $\pm$ 0.20} & {\bf 70.78{\tiny $\pm$ 0.32}} & 77.67\\
GraP & 83.54{\tiny $\pm$ 0.71} & {\bf 64.31{\tiny $\pm$ 0.08}} & {\bf 79.82{\tiny $\pm$ 0.07}} & {\bf 91.44{\tiny $\pm$ 0.05}} & 70.56{\tiny $\pm$ 0.14} & \bf 77.94\\

\end{tabular}
\caption{Comparison of weights normalization strategies.}
\label{tab:normalization}
\end{table*}

\textbf{Auxiliary training.}
In this section, we compare the standard pretraining + fine-tuning paradigm with auxiliary training, where the supervised loss is optimized jointly with the pretraining objectives. Table~\ref{tab:aux} shows that auxiliary training improves over the unsupervised equal-weights baseline, as expected. However, pretraining followed by supervised fine-tuning achieves better average performance, likely due to reduced overfitting on limited labeled data. GraP further improves over auxiliary training even in the unsupervised setting, while supervised fine-tuning yields the best overall results.
\begin{table*}[h!]
\centering
\begin{tabular}{l|ccccc|c}
Method & Churn & AgePred & Alfabattle & MIMIC-III & Taobao & AVG \\ 
\hline
Supervised & 79.43{\tiny $\pm$ 0.64} & 61.48{\tiny $\pm$ 0.29} & 78.87{\tiny $\pm$ 0.18} & 91.56{\tiny $\pm$ 0.24} & 69.57{\tiny $\pm$ 0.20} & 76.18\\
Auxiliary & {\bf 84.14{\tiny $\pm$ 0.46}} & 64.28{\tiny $\pm$ 0.24} & 78.81{\tiny $\pm$ 0.07} & 91.26{\tiny $\pm$ 0.15} & 70.49{\tiny $\pm$ 0.33} & 77.80\\
\hline
Equal Weights & 83.79{\tiny $\pm$ 0.79} & 63.76{\tiny $\pm$ 0.41} & 78.57{\tiny $\pm$ 0.00} & 91.25{\tiny $\pm$ 0.05} & 70.20{\tiny $\pm$ 0.49} & 77.51\\
Equal Weights Tuned & 83.44{\tiny $\pm$ 0.46} & 64.32{\tiny $\pm$ 0.70} & 79.60{\tiny $\pm$ 0.04} & {\bf 92.23{\tiny $\pm$ 0.18}} & 70.40{\tiny $\pm$ 0.24} & 78.00\\
\hline
GraP & 83.54{\tiny $\pm$ 0.71} & 64.31{\tiny $\pm$ 0.08} & 79.82{\tiny $\pm$ 0.07} & 91.44{\tiny $\pm$ 0.05} & {\bf 70.56{\tiny $\pm$ 0.14}} & 77.94\\
GraP Tuned & 83.09{\tiny $\pm$ 0.81} & {\bf 64.78{\tiny $\pm$ 0.55}} & {\bf 80.66{\tiny $\pm$ 0.19}} & 92.05{\tiny $\pm$ 0.06} & 70.40{\tiny $\pm$ 0.30} & \bf 78.20\\

\end{tabular}
\caption{Comparison with auxiliary training.}
\label{tab:aux}
\end{table*}

\subsection{Computational Cost and Memory Usage}

\textbf{Computation speed.}
Table~\ref{tab:speed} compares the computational cost of a single training run, GraP, and classical hyperparameter optimization methods for RNN-based models on event-sequence datasets. Across datasets, our method introduces a moderate overhead compared to standard training, with an average slowdown of $44\%$ in throughput and $34\%$ in epoch time. The exact overhead depends on the dataset and model configuration, but remains within a constant factor (Sec.~\ref{sec:shared-space}). In contrast, Bayesian optimization and TPE typically require dozens of independent training runs~\cite{snoek2012bayesian,bergstra2013sciencehpo}, resulting in orders-of-magnitude higher computational cost. Under a conservative estimate of 50 runs, the effective overhead ranges from $5000\%$ to $18000\%$. Thus, the proposed method achieves competitive performance while requiring only a single training run with moderate additional computation.

\begin{table}[h!]
\centering
\resizebox{\textwidth}{!}{
\begin{tabular}{l|l|ccccc|c}
    Metric & Method & Churn & AgePred & AlfaBattle & MIMIC-III & TaoBao & AVG \\ 
    \hline
    \multirow{3}{*}{RPS} & Baseline & 2062 & 110 & 152 & 68 & 94 \\
    & GraP & 1266 & 82 & 86 & 54 & 79 \\
    & \textit{Overhead} & \textit{63\%} & \textit{34\%} & \textit{79\%} & \textit{26\%} & \textit{19\%} & \textit{44\%} \\
    \hline
    \multirow{3}{*}{Epoch time (s)} & Baseline & 1.64 & 35.7 & 552.2 & 58.7 & 31.5 \\
    & GraP & 2.16 & 46.6 & 946.8 & 70.2 & 36.9 \\
    & \textit{Overhead} & \textit{32\%} & \textit{31\%}  & \textit{71\%} & \textit{20\%} & \textit{17\%} & \textit{34\%} \\
    \hline
    \multicolumn{2}{l|}{Expected Bayesian / TPE Overhead} & \textit{6000\%} & \textit{5000\%} & \textit{18000\%} & \textit{5000\%} & \textit{5000\%} & \textit{7800\%} \\
\end{tabular}
}
\caption{RNN training and hyperparameter tuning speed comparison.}
\label{tab:speed}
\end{table}

\textbf{Memory usage.}
Our approach does not require multiple model replicas or optimization states across runs. The additional memory overhead comes from storing per-loss gradients at the embedding level, which is typically small compared to the backbone size in computer vision models. However, for smaller sequential models, the relative overhead can be more noticeable. For example, memory usage increases by a factor of $2.8$ on AgePred and $3.7$ on AlfaBattle compared to standard training.

\section{Related Work}

\textbf{Classical hyperparameter optimization.} Classical HPO treats training as a black-box procedure and searches over configurations using grid search, random search, Bayesian optimization, or related model-based methods~\cite{snoek2012bayesian,bergstra2013sciencehpo}. While effective, these approaches typically require many independent training runs, which can make them expensive for large-scale pretraining with multiple loss terms.

\textbf{Differentiable hyperparameter optimization.}
Gradient-based hyperparameter optimization instead differentiates validation performance with respect to hyperparameters such as learning rates, initialization parameters, and regularization strengths~\cite{maclaurin2015gradient,fu2016drmad,franceschi2018bilevel}. Subsequent work reduced the computational cost using truncated backpropagation or implicit differentiation~\cite{luketina2016scalable,lorraine2020optimizing}. Closest to our setting, \citet{luketina2016scalable} optimize continuous regularization hyperparameters online with a small constant overhead. In contrast, we focus on loss-function hyperparameters and exploit the objective's structure to avoid the separate backward passes required for each loss term.

\textbf{Multi-task and multi-objective learning.}
Learning with multiple objectives is central to multi-task learning, where task gradients may conflict and simple scalarization can lead to suboptimal trade-offs~\cite{sener2018mtl,quinton2024jacobian}. Recent works study scalable loss balancing, uncertainty-based weighting, and auxiliary-task weighting~\cite{kendall2017uncertainty,li2024unprejudiced,gregoire2024sample,xiao2025ldc}. Unlike standard multi-task learning, our goal is not to jointly optimize all objectives, but to tune pretraining loss weights to improve downstream transfer performance.

\textbf{Auxiliary learning.}
Several works adapt auxiliary-task weights using gradient similarity or related heuristics~\cite{du2018auxadapt,lin2019adaptiveaux}. Closest to our work, \citet{lin2019adaptiveaux} use gradient alignment between auxiliary and main tasks to adapt task weights in reinforcement learning. Our approach is conceptually related, but focuses on the pretraining task and introduces an efficient optimization procedure based on shared-representation gradients and composite-gradient normalization.

\section{Limitations}

The proposed method has several limitations. First, it assumes that each pretraining loss is applied via a separate head. In settings where multiple losses share parameters within a common head, the gradients with respect to the shared representation can still be computed independently via separate backward passes. However, tuning the shared head itself may be more challenging, as updates driven by one loss can suppress or interfere with signals from others. Extending the method to better handle such interactions at the head level is an important direction for future work.

Second, when multiple losses are highly correlated, the method may prioritize a subset that is most aligned with the downstream objective. While this simplifies the effective objective, it may underutilize complementary signals, particularly in the presence of gradient noise. Extensions that account for correlation or uncertainty (e.g., probabilistic weighting) may further improve robustness.

Finally, proposed updates can affect the magnitude of the composite gradient, whereas many training pipelines rely on carefully tuned learning rate schedules. As a result, adjusting the gradient scale may be beneficial to maintain stable optimization.

\section{Conclusion}
In this paper, we studied the problem of tuning loss weights in composite pretraining objectives and formulated it as a bilevel optimization task targeting downstream performance. We proposed a gradient-based method that updates the weights by aligning the pretraining update with the downstream gradient, while reducing computational cost through shared-representation optimization and gradient reuse.

Empirically, the method improves average performance over baselines and carefully tuned hyperparameters across event-sequence modeling tasks while requiring only a single training run with moderate overhead. In self-supervised vision, where objectives are typically better balanced and involve fewer components, the approach remains competitive with standard tuning strategies. Overall, the results suggest that gradient-based loss-weight optimization is an efficient alternative to classical multi-task learning and search-based hyperparameter optimization methods, particularly in settings with many interacting loss terms or limited computational budgets.

\bibliographystyle{unsrtnat}
\bibliography{main}

\appendix

\newpage

\appendix
\section{Multi-step SGD Rollout}
\label{app:rollout}

In Sec.~\ref{sec:sgd-update}, we derived the loss-weight update using a single-step SGD approximation.
We now extend this analysis to a multi-step rollout and show that, to first order,
the required hypergradient depends on the accumulated pretraining gradients along the trajectory.

\paragraph{Setup.}
Recall the composite pretraining loss
\[
L_{\mathrm{pre}}(\theta, \mathbf{w}) = \sum_{k=1}^K \mathbf{w}_k L_k(\theta),
\]
and consider an SGD trajectory of length $n$:
\[
\theta_{t+1}
=
\theta_t
-
\eta
\sum_{k=1}^K \mathbf{w}_k \nabla_\theta L_k(\theta_t),
\quad t = 0, \dots, n.
\]
We are interested in the hypergradient of the downstream loss
$L_{\mathrm{down}}(\theta_{n+1})$ with respect to $\mathbf{w}_i$.

\begin{theorem}[First-order multi-step hypergradient]
\label{thm:multi_step}
Assume that each $L_k$ is twice continuously differentiable and that
$\|\nabla^2_\theta L_i(\theta)\| \le C$ for all $k$ along the trajectory.
For a fixed rollout length $n$ and sufficiently small step size $\eta$,
the hypergradient satisfies
\[
\frac{\partial L_{\mathrm{down}}(\theta_{n+1})}{\partial \mathbf{w}_i}
=
-\eta
\nabla_\theta L_{\mathrm{down}}(\theta_{n+1})^\top
\sum_{t=0}^{n}
\nabla_\theta L_i(\theta_t)
+
O(n^2 \eta^2).
\]
\end{theorem}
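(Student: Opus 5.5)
I would differentiate the SGD recursion with respect to $\mathbf{w}_i$ and unroll the resulting linear recursion for the Jacobians $J_t = \partial \theta_t / \partial \mathbf{w}_i$. Since $\theta_0$ does not depend on $\mathbf{w}$, we have $J_0 = 0$. Differentiating $\theta_{t+1} = \theta_t - \eta \sum_k \mathbf{w}_k \nabla_\theta L_k(\theta_t)$ gives
\[
J_{t+1} = \bigl(I - \eta H_t\bigr) J_t - \eta \nabla_\theta L_i(\theta_t),
\qquad
H_t = \sum_{k=1}^K \mathbf{w}_k \nabla^2_\theta L_k(\theta_t).
\]
By the chain rule, $\partial L_{\mathrm{down}}(\theta_{n+1})/\partial \mathbf{w}_i = \nabla_\theta L_{\mathrm{down}}(\theta_{n+1})^\top J_{n+1}$. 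The whole task is therefore to show that $J_{n+1} = -\eta \sum_{t=0}^n \nabla_\theta L_i(\theta_t) + O(n^2\eta^2)$.

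Unrolling the recursion gives the exact expression
\[
J_{n+1} = -\eta \sum_{t=0}^{n} \Bigl(\prod_{s=t+1}^{n} (I - \eta H_s)\Bigr) \nabla_\theta L_i(\theta_t),
\]
where the empty product is $I$. With the Hessian bound $\|H_s\| \le C\|\mathbf{w}\|_1 =: C'$, I expand each product as
\[
\prod_{s=t+1}^{n}(I-\eta H_s) = I + R_t, \qquad \|R_t\| \le (1+\eta C')^{n-t} - 1 \le e^{n\eta C'} - 1 \le 2 n\eta C'
\]
for $n\eta C' \le 1$. This last condition is what "sufficiently small $\eta$" means for fixed $n$. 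Substituting, the error term is $-\eta \sum_t R_t \nabla_\theta L_i(\theta_t)$. Its norm is at most $\eta \cdot (n+1) \cdot 2n\eta C' \cdot \max_t \|\nabla_\theta L_i(\theta_t)\|$, which is $O(n^2\eta^2)$. Contracting with $\nabla_\theta L_{\mathrm{down}}(\theta_{n+1})$ preserves this order, and the claimed formula follows.

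The main obstacle is controlling the constants hidden in $O(\cdot)$. They require the gradients $\nabla_\theta L_i(\theta_t)$ and $\nabla_\theta L_{\mathrm{down}}(\theta_{n+1})$ to be bounded along the trajectory. I would make this explicit as a boundedness assumption, which is natural since the trajectory is finite and the functions are $C^2$, or I would absorb it into the constant. I also read the stated Hessian bound as applying to all $L_k$, with $\mathbf{w}$ treated as fixed and bounded.

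One point needs care. The leading term evaluates the downstream gradient at $\theta_{n+1}$ but the pretraining gradients at the $\theta_t$. No further approximation such as $\theta_t \approx \theta_0$ is needed, so the only error comes from the Hessian products. If one wanted the stronger form with all gradients evaluated at $\theta_0$, an additional $O(n^2\eta^2)$ term would come from $\|\theta_t - \theta_0\| = O(t\eta)$ via the Hessian bound. The plan above avoids that step.
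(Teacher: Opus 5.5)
Your proof is correct and follows the same route as the paper: differentiate the SGD recursion for the sensitivity $\partial\theta_t/\partial\mathbf{w}_i$ (which starts at zero), show that the Hessian-transport terms only contribute relative corrections of order $n\eta$, unroll over the $n+1$ steps, and finish with the chain rule. Your exact propagator-product unrolling with the bound $(1+\eta C')^{n-t}-1\le 2n\eta C'$ is a more careful version of the paper's step ``$\|\Omega_i^t\| = O(t\eta)$, so the Hessian term is $O(n\eta^2)$ per step.'' It also makes explicit three things the paper leaves implicit: the smallness condition $n\eta C'\le 1$, the dependence on $\|\mathbf{w}\|_1$, and the bounded-gradient assumption.
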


\begin{proof}
Let vector
\[
\Omega_i^t := \frac{\partial \theta_t}{\partial w_i}
\]
denote the sensitivity of the parameters to the weight $\mathbf{w}_i$.
Since $\theta_0$ does not depend on $\mathbf{w}_i$, we have $\Omega_i^0 = 0$.

Differentiating the SGD update yields
\[
\Omega_i^{t+1}
= \frac{\partial \theta_{t + 1}}{\partial \mathbf{w}_i} =
\Omega_i^t
-
\eta \nabla_\theta L_i(\theta_t)
-
\eta
\sum_{k=1}^K
\mathbf{w}_k
\nabla_\theta^2 L_k(\theta_t)\,\Omega_i^t.
\]

We analyze the magnitude of $\Omega_i^t$.
From the recursion, each step contributes a term of size $O(\eta)$,
so for fixed $n$,
\[
\|\Omega_i^t\| = O(t \eta) = O(n \eta).
\]

Substituting this into the Hessian term gives
\[
\left\|
\eta \sum_{k=1}^K \mathbf{w}_k \nabla_\theta^2 L_k(\theta_t)\,\Omega_i^t
\right\|
\le
\eta C \|\Omega_i^t\|
=
O(n \eta^2).
\]

Thus, the recursion becomes
\[
\Omega_i^{t+1}
=
\Omega_i^t
-
\eta \nabla_\theta L_i(\theta_t)
+
O(n \eta^2).
\]

Unrolling from $t=0$ to $n$ gives
\[
\Omega_i^{n+1}
=
-\eta
\sum_{t=0}^{n}
\nabla_\theta L_i(\theta_t)
+
O(n^2 \eta^2).
\]

Finally, applying the chain rule,
\[
\frac{\partial L_{\mathrm{down}}(\theta_{n+1})}{\partial \mathbf{w}_i}
=
\nabla_\theta L_{\mathrm{down}}(\theta_{n+1})^\top \Omega_i^{n+1},
\]
which yields the desired result.
\end{proof}

\paragraph{Discussion.}
Theorem~\ref{thm:multi_step} shows that, up to first order in $\eta$,
the hypergradient is determined by the alignment between the downstream gradient
at the end of the rollout and the \emph{accumulated} gradients of each pretraining
loss along the trajectory.

Importantly, all steps contribute at the same order $O(\eta)$, while the deviation
from this accumulation arises from Hessian-dependent transport terms, which are
of higher order $O(n^2 \eta^2)$. Thus, the one-step approximation used in
Sec.~4.1 corresponds to the special case $n=0$.

In practice, the accumulated gradients can be approximated using an exponential
moving average (EMA):
\[
m_i^{t+1} = \beta m_i^t + (1-\beta)\nabla_\theta L_i(\theta_t),
\]
and the alignment score becomes
\[
\nabla_\theta L_{\mathrm{down}}(\theta_t)^\top m_i^t.
\]

In preliminary experiments, we did not observe improvements from using exponential moving average (EMA) gradients. This is likely because the trajectory of the weight vector under SGD is already smooth, with the degree of smoothing effectively controlled by the learning rate. Moreover, maintaining EMA gradients introduces additional memory overhead and is not compatible with the proposed upper-bound approximation.

\section{Composite Gradient Normalization}
\label{app:normalization}

The objective in Eq.~\ref{eq:opt-nonorm} is scale-invariant with respect to the loss weights $\mathbf{w}$, leading to ambiguity in the magnitude of the composite gradient. To remove this ambiguity, we normalize the weighted pretraining gradient by its norm. This yields the following objective:
\begin{equation}
\argmax_{\mathbf{w} \in \mathbb{R}^K, \, w_k \ge 0}
\frac{\mathbf{w}^\top G g_\mathrm{down}}{\left\| \mathbf{w}^\top G \right\|}.
\end{equation}

Since scaling $g_\mathrm{down}$ does not affect the optimum, the objective is equivalent to maximizing the cosine similarity between the composite pretraining gradient and the downstream gradient:
\begin{equation}
\argmax_{\mathbf{w} \in \mathbb{R}^K, \, w_k \ge 0, \, \|\mathbf{w}\| = 1}
\cos\!\left(\mathbf{w}^\top G,\, g_\mathrm{down}\right).
\end{equation}

The following proposition shows that the optimum corresponds to directional alignment between the composite pretraining gradient and the downstream gradient.

\begin{proposition}
Assume that there exists $\mathbf{w}$ such that
\[
\mathbf{w}^\top G = \alpha g_{\mathrm{down}}
\]
for some $\alpha > 0$. Then the objective
\begin{equation}
\max_{\mathbf{w}}
\frac{\mathbf{w}^\top G g_{\mathrm{down}}}
{\|\mathbf{w}^\top G\|}
\end{equation}
is maximized when the composite pretraining gradient $\mathbf{w}^\top G$ is aligned with the downstream gradient $g_{\mathrm{down}}$.
\end{proposition}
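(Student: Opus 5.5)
The plan is to bound the objective with the Cauchy--Schwarz inequality and then show that the hypothesis makes the bound attainable. Write $u = \mathbf{w}^\top G$ for the composite pretraining gradient. It is a vector in the same space as $g_{\mathrm{down}}$, and it is nonzero whenever the objective is defined. The objective then becomes
\[
f(\mathbf{w}) = \frac{u^\top g_{\mathrm{down}}}{\|u\|} = \|g_{\mathrm{down}}\| \cos\left(u, g_{\mathrm{down}}\right).
\]
By Cauchy--Schwarz, $u^\top g_{\mathrm{down}} \le \|u\|\,\|g_{\mathrm{down}}\|$. Hence $f(\mathbf{w}) \le \|g_{\mathrm{down}}\|$ for every admissible $\mathbf{w}$, including the nonnegativity-constrained ones. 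This upper bound does not depend on $\mathbf{w}$.

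The second step is attainment. Take the $\mathbf{w}^\ast$ guaranteed by the assumption, with $\mathbf{w}^{\ast\top} G = \alpha g_{\mathrm{down}}$ and $\alpha>0$. Substituting gives
\[
f(\mathbf{w}^\ast) = \frac{\alpha\|g_{\mathrm{down}}\|^2}{\alpha\|g_{\mathrm{down}}\|} = \|g_{\mathrm{down}}\|,
\]
which meets the bound. Two implicit points need handling here. First, $g_{\mathrm{down}}\neq 0$; otherwise the objective is identically zero and the statement is trivial. Second, $\mathbf{w}^\ast$ must lie in the feasible set $w_k\ge 0$. I would read the hypothesis as asserting a feasible such $\mathbf{w}$, since that is the setting of Eq.~\ref{eq:opt-norm}.

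The third step is the converse, which is the real content of ``maximized when aligned'': every maximizer is aligned. Suppose $f(\mathbf{w}) = \|g_{\mathrm{down}}\|$. Then equality holds in Cauchy--Schwarz, so $u = \lambda g_{\mathrm{down}}$ for some scalar $\lambda$. The sign of $\lambda$ is then forced: $f(\mathbf{w}) = \operatorname{sign}(\lambda)\|g_{\mathrm{down}}\| > 0$, so $\lambda>0$.

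Two remarks close the argument. The objective is invariant under $\mathbf{w}\mapsto c\mathbf{w}$ for $c>0$, so the maximizers form a cone, and alignment fixes only the direction of $\mathbf{w}^\top G$, not the scale of $\mathbf{w}$. This is exactly why the normalization $\|\mathbf{w}^\top G\|=1$ pins down a representative. I would also contrast this with the sum or norm constraints on $\mathbf{w}$, under which the linear objective of Eq.~\ref{eq:opt-nonorm} favors a vertex or a $G$-weighted direction rather than $g_{\mathrm{down}}$ itself.

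The main obstacle is conceptual rather than technical: the degenerate cases $u=0$ and $g_{\mathrm{down}}=0$, and the feasibility of $\mathbf{w}^\ast$ under the nonnegativity constraint. I would dispose of these explicitly at the start, so that the Cauchy--Schwarz argument is clean.
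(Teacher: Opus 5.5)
Your proposal is correct and follows essentially the same route as the paper: the Cauchy--Schwarz bound $\mathbf{w}^\top G g_{\mathrm{down}}/\|\mathbf{w}^\top G\| \le \|g_{\mathrm{down}}\|$, with equality exactly when $\mathbf{w}^\top G = \alpha g_{\mathrm{down}}$ for some $\alpha>0$. Your explicit handling of attainment via the hypothesized $\mathbf{w}$, of the degenerate cases $g_{\mathrm{down}}=0$ and $\mathbf{w}^\top G=0$, and of feasibility under $w_k \ge 0$ makes precise what the paper's one-line proof leaves implicit.
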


\begin{proof}
By the Cauchy--Bunyakovsky--Schwarz inequality,
\begin{equation}
\frac{\mathbf{w}^\top G g_{\mathrm{down}}}
{\|\mathbf{w}^\top G\|}
\le
\|g_{\mathrm{down}}\|,
\end{equation}
with equality if and only if
\[
\mathbf{w}^\top G = \alpha g_{\mathrm{down}}
\]
for some $\alpha > 0$.
\end{proof}

Thus, in the ideal case, the normalized objective recovers a composite pretraining gradient whose direction coincides with the downstream gradient. This motivates the use of composite-gradient normalization when updating the loss weights $\mathbf{w}$.

\section{Embedding-Space Approximation}
\label{app:embedding-space}

In Sec.~\ref{sec:shared-space}, we replace parameter-space gradient comparisons with comparisons in the shared representation space. This section provides additional justification for this approximation and discusses the assumptions underlying the resulting objective.

\paragraph{Assumptions.}
We assume that the backbone mapping \(z = g(x, \theta)\) is differentiable with respect to \(\theta\), and that its Jacobian
\[
J_\theta = \frac{\partial z}{\partial \theta}
\]
has bounded operator norm in a neighborhood of the optimization trajectory. Under this assumption, parameter-space gradients can be expressed through embedding-space gradients via the chain rule, and the corresponding gradient mismatch in parameter space can be controlled by the mismatch in the shared representation space.

\paragraph{Distance interpretation.}
Recall the normalized parameter-space objective introduced in Sec.~\ref{sec:normalization}:
\begin{equation}
\argmax_{\mathbf{w} \in \mathbb{R}^K, \, w_k \ge 0}
\frac{\mathbf{w}^\top G g_\mathrm{down}}
{\left\| \mathbf{w}^\top G \right\|}.
\label{eq:param-obj}
\end{equation}

This objective can be interpreted as minimizing the distance between the normalized composite gradient and the downstream gradient:
\begin{equation}
\left\|
\frac{\mathbf{w}^\top G}
{\left\| \mathbf{w}^\top G \right\|}
-
g_\mathrm{down}
\right\|^2
=
1 + \|g_\mathrm{down}\|^2
-
2
\frac{\mathbf{w}^\top G g_\mathrm{down}}
{\left\| \mathbf{w}^\top G \right\|}.
\end{equation}
Since \(g_\mathrm{down}\) does not depend on \(\mathbf{w}\), maximizing Eq.~\ref{eq:param-obj} is equivalent to minimizing the corresponding distance.

\paragraph{Upper-bound approximation.}
Following prior multi-task learning methods~\cite{sener2018mtl,chen2018gradnorm,senushkin2023independent}, we compare gradients in the shared representation space rather than in the full parameter space. Let
\[
z = g(\mathcal{B}, \theta)
\]
denote the backbone embedding, and define the embedding-space gradients
\begin{equation}
\tilde g_k =
\nabla_z \mathcal{L}_k(h_k(z)),
\qquad
\tilde g_\mathrm{down} =
\nabla_z \mathcal{L}_{\mathrm{down}}(h_\mathrm{down}(z)).
\end{equation}

By the chain rule,
\begin{equation}
g_k = J_\theta^\top \tilde g_k,
\qquad
g_\mathrm{down} = J_\theta^\top \tilde g_\mathrm{down}.
\end{equation}

Using these relations, the parameter-space mismatch can be bounded by the corresponding mismatch in the embedding space.

\begin{theorem}
For any \(\mathbf{w}\),
\begin{equation}
\left\|
\mathbf{w}^\top G - g_\mathrm{down}
\right\|
\le
\left\|J_\theta^\top\right\|
\left\|
\mathbf{w}^\top \tilde G - \tilde g_\mathrm{down}
\right\|,
\end{equation}
where \(\|\cdot\|\) denotes the spectral norm.
\end{theorem}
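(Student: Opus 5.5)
The plan is to reduce the statement to a single application of the chain rule followed by the definition of the operator (spectral) norm. First, I will rewrite the parameter-space composite gradient through the embedding-space gradients. Using the chain-rule relations $g_k = J_\theta^\top \tilde g_k$ and $g_\mathrm{down} = J_\theta^\top \tilde g_\mathrm{down}$ stated just before the theorem, linearity gives
\begin{equation*}
\mathbf{w}^\top G = \sum_{k=1}^K w_k g_k = \sum_{k=1}^K w_k J_\theta^\top \tilde g_k = J_\theta^\top \Big(\sum_{k=1}^K w_k \tilde g_k\Big) = J_\theta^\top \big(\mathbf{w}^\top \tilde G\big)^\top .
\end{equation*}
Here I treat $\mathbf{w}^\top G$ and $\mathbf{w}^\top \tilde G$ as column vectors in parameter and embedding space respectively, matching the paper's slight abuse of notation.

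Second, I subtract the downstream gradient and factor out the common Jacobian:
\begin{equation*}
\mathbf{w}^\top G - g_\mathrm{down} = J_\theta^\top\big(\mathbf{w}^\top \tilde G - \tilde g_\mathrm{down}\big).
\end{equation*}
This step uses the fact that all losses, including the downstream one, are evaluated through the same embedding $z = g(\mathcal{B},\theta)$, so the same $J_\theta$ appears in every term.

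Third, I take Euclidean norms and apply $\|Av\| \le \|A\|\,\|v\|$ with $A = J_\theta^\top$, where $\|A\|$ is the spectral norm. This yields the claimed inequality for every $\mathbf{w}$; no sign constraint on $\mathbf{w}$ is needed. The bounded-Jacobian assumption stated earlier in this appendix only ensures that the constant on the right-hand side is finite.

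The only genuine subtlety, rather than an obstacle, is the bookkeeping about which embedding $J_\theta$ refers to. The downstream loss is computed on the labeled subset $\mathcal{B}_s \subseteq \mathcal{B}$, while the pretraining losses use all of $\mathcal{B}$. I will handle this by viewing $z$ as the full-batch embedding and $\tilde g_\mathrm{down}$ as its gradient, padded with zeros on the unlabeled rows. With that convention a single Jacobian serves all terms, and the factorization above is exact.
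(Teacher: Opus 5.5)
Your proposal is correct and matches the paper's proof: rewrite both gradients via the chain rule as $J_\theta^\top$ applied to their embedding-space counterparts, factor out $J_\theta^\top$, and apply the operator-norm inequality $\|Av\| \le \|A\|\,\|v\|$. Your zero-padding convention for the labeled subset $\mathcal{B}_s$ is a sensible clarification that the paper leaves implicit, and it ensures a single Jacobian serves every term.
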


\begin{proof}
Using the chain rule,
\[
\mathbf{w}^\top G
=
J_\theta^\top
\mathbf{w}^\top \tilde G,
\qquad
g_\mathrm{down}
=
J_\theta^\top
\tilde g_\mathrm{down}.
\]
Therefore,
\begin{align}
\left\|
\mathbf{w}^\top G - g_\mathrm{down}
\right\|
&=
\left\|
J_\theta^\top
\left(
\mathbf{w}^\top \tilde G
-
\tilde g_\mathrm{down}
\right)
\right\| \\
&\le
\left\|J_\theta^\top\right\|
\left\|
\mathbf{w}^\top \tilde G
-
\tilde g_\mathrm{down}
\right\|,
\end{align}
where the last step follows from submultiplicativity of the operator norm.
\end{proof}

Thus, minimizing the embedding-space mismatch also minimizes the corresponding parameter-space mismatch up to a Jacobian-dependent constant factor. This motivates replacing parameter-space gradient comparisons with embedding-space comparisons.

\paragraph{Normalization-factor approximation.}
The final GraP objective additionally replaces the parameter-space normalization factor
\[
\left\|\mathbf{w}^\top G\right\|
\]
with its embedding-space counterpart
\[
\left\|\mathbf{w}^\top \tilde G\right\|.
\]

Let
\[
x = \mathbf{w}^\top \tilde G.
\]
Since
\[
\mathbf{w}^\top G = J_\theta^\top x,
\]
the denominator can be related to the embedding-space norm through the spectral properties of the Jacobian.

\begin{theorem}
Assume that the embedding-space alignment is nonnegative:
\[
x^\top \tilde g_{\mathrm{down}} \ge 0.
\]
Then
\begin{equation}
\frac{x^\top \tilde g_{\mathrm{down}}}
{\|\mathbf{w}^\top G\|}
\ge
\frac{1}{\sigma_{\max}(J_\theta^\top)}
\frac{x^\top \tilde g_{\mathrm{down}}}
{\|x\|},
\end{equation}
where \(\sigma_{\max}(J_\theta^\top)\) denotes the largest singular value of \(J_\theta^\top\).
\label{thm:lowerbound}
\end{theorem}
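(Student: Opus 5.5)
The plan is to bound the denominator $\|\mathbf{w}^\top G\|$ from above by a multiple of $\|x\|$, and then use the sign assumption to flip the inequality after taking reciprocals. No other ingredients are needed; the statement is essentially a consequence of the operator-norm bound already used in the previous theorem.

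\textbf{Step 1: bound the denominator.} By the chain rule relation $\mathbf{w}^\top G = J_\theta^\top x$ and the definition of the spectral norm,
\[
\|\mathbf{w}^\top G\| = \|J_\theta^\top x\| \le \sigma_{\max}(J_\theta^\top)\,\|x\|.
\]
This is exactly the submultiplicativity step from the preceding theorem, now applied to $x$ alone instead of to the difference $x - \tilde g_\mathrm{down}$.

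\textbf{Step 2: take reciprocals.} When $\mathbf{w}^\top G \neq 0$, both sides of the Step 1 inequality are positive. Inverting gives
\[
\frac{1}{\|\mathbf{w}^\top G\|} \ge \frac{1}{\sigma_{\max}(J_\theta^\top)\,\|x\|}.
\]
Note that $\mathbf{w}^\top G \neq 0$ forces $x \neq 0$ and $\sigma_{\max}(J_\theta^\top) > 0$, so the right-hand side is well defined.

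\textbf{Step 3: multiply by the numerator.} Multiply both sides by $x^\top \tilde g_\mathrm{down}$. By hypothesis this quantity is nonnegative, so the direction of the inequality is preserved, and the claimed bound follows.

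\textbf{Main obstacle: degenerate cases.} The only delicate point is the case $\mathbf{w}^\top G = 0$, where the left-hand side is undefined. I would handle this with a brief remark. Either one restricts to $\mathbf{w}$ with $\mathbf{w}^\top G \neq 0$ (implicit, since the objective is only defined there), or one interprets both sides under a convention. If $x = 0$, the numerator vanishes and both sides are $0$ or undefined together.

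I would also note explicitly that the nonnegativity hypothesis is what makes the inequality go in the stated direction. Without it, multiplying by a negative numerator would reverse the inequality. This is why the theorem is read as saying that maximizing the embedding-space objective maximizes a lower bound on the parameter-space objective whenever the alignment is nonnegative.
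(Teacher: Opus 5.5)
Your proposal is correct and matches the paper's proof: bound $\|\mathbf{w}^\top G\| = \|J_\theta^\top x\| \le \sigma_{\max}(J_\theta^\top)\|x\|$, then use the nonnegativity of $x^\top \tilde g_{\mathrm{down}}$ to preserve the direction of the inequality after taking reciprocals. Your explicit handling of the degenerate case $\mathbf{w}^\top G = 0$ is a small improvement, since the paper leaves it implicit.
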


\begin{proof}
Since
\[
\mathbf{w}^\top G = J_\theta^\top x,
\]
the spectral norm inequality gives
\[
\|\mathbf{w}^\top G\|
=
\|J_\theta^\top x\|
\le
\sigma_{\max}(J_\theta^\top)\|x\|.
\]
Because the numerator is assumed to be nonnegative,
\[
x^\top \tilde g_{\mathrm{down}} \ge 0,
\]
dividing by the denominator preserves the inequality:
\[
\frac{x^\top \tilde g_{\mathrm{down}}}
{\|\mathbf{w}^\top G\|}
\ge
\frac{x^\top \tilde g_{\mathrm{down}}}
{\sigma_{\max}(J_\theta^\top)\|x\|}.
\]
Substituting \(x=\mathbf{w}^\top \tilde G\) completes the proof.
\end{proof}

Thus, the normalized embedding-space objective used in GraP can be interpreted as maximizing a lower-bound surrogate of the corresponding parameter-space objective up to a Jacobian-dependent constant. As shown in Appendix~\ref{app:trajectories}, the alignment remains positive throughout training in our experiments, making the assumption of Theorem~\ref{thm:lowerbound} applicable to the updates performed in practice.

\section{Training Details}
\label{app:training-details}
Here we summarize the main hyperparameters and training details. Additional information is available in the configuration files provided with the source code.

\paragraph{Event Sequences.}
An overview of the datasets is presented in Table~\ref{tab:seq-datasets}.
\begin{table*}[h]
\centering
\begin{tabular}{l|ccc|ccc}
\multirow{2}{*}{Dataset} & \multirow{2}{*}{\# Seq.} & \multirow{2}{*}{\thead{Fields / \\ \# Losses}} & Mean & \multicolumn{3}{c}{Downstream} \\
& & & length & Target & Classes & Metric \\
\hline
Churn & 10217 & 6 & 99.3 & Churn & 2 & ROC AUC \\
AgePred & 50000 & 3 & 875 & Age group & 4 & Accuracy \\
Alfabattle & 1466527 & 18 & 234 & Default & 2 & ROC AUC \\
MIMIC-III & 52103 & 3 & 407 & Mortality & 2 & ROC AUC \\
Taobao & 9904 & 3 & 527 & Activity & 2 & ROC AUC \\
MBD mini & 98721 & 13 & 372 & Credit & 2 & ROC AUC \\
\end{tabular}
\caption{Sequential datasets statistics}
\label{tab:seq-datasets}
\end{table*}

We follow the training setup from the HT-Transformer paper~\cite{karpukhin2025ht}. All models are trained using the Adam optimizer~\cite{kingma2014adam} with a fixed learning rate of \(0.001\). Depending on the dataset, training runs for up to 60--120 epochs with early stopping based on validation performance. Experiments are conducted on NVIDIA A100 GPUs: two GPUs for all datasets except AlfaBattle, which uses four GPUs. Dataset-specific hyperparameters are listed in Table~\ref{tab:dataset-hyperparameters}.
\begin{table}[h]
\centering
\begin{tabular}{l|cccccc}
    \multirow{2}{*}{Dataset} & \multirow{2}{*}{Epochs} & \multicolumn{2}{c}{Hidden layer} & Transf. & \multicolumn{2}{c}{Positional encoding} \\
    & & RNN & Transf. & layers & m & M \\
    \hline
    Churn & 120 & 256 & 256 & 4 & 4.2 & 6500 \\
    AgePred & 45 & 1536 & 512 & 8 & 0.003 & 8 \\ 
    Alfabattle & 60 & 1024 & 512 & 10 & 0.2 & 500\\
    MIMIC-III & 30 & 1280 & 512 & 8 & 1 & 4000 \\
    Taobao & 30 & 1280 & 512 & 8 & 0.008 & 30 \\
    MBD mini & 45 & 1024 & 512 & 8 & 0.004 & 8 \\
    \hline
\end{tabular}
\caption{Dataset-specific parameters.}
\label{tab:dataset-hyperparameters}
\end{table}

\paragraph{Computer vision.}
For the computer vision experiments, we use the solo-learn benchmark~\cite{da2022solo}. All models are trained with the LARS optimizer~\cite{you2017lars}. We train for 1000 epochs on CIFAR-10 and CIFAR-100, and for 400 epochs on ImageNet-100. Training is performed on a single A100 GPU for the CIFAR datasets and on two A100 GPUs for ImageNet-100.

\section{Example Weights Trajectories}
\label{app:trajectories}
Below we present the plots of the tuned weights dynamics during training:
\begin{itemize}
    \item Churn: Fig.~\ref{fig:traj-churn},
    \item AgePred: Fig.~\ref{fig:traj-agepred},
    \item AlfaBattle: Fig.~\ref{fig:traj-alfabattle},
    \item MIMIC-III: Fig.~\ref{fig:traj-mimiciii},
    \item Taobao: Fig.~\ref{fig:traj-taobao},
    \item CIFAR-10: Fig.~\ref{fig:traj-cifar10},
    \item CIFAR-100: Fig.~\ref{fig:traj-cifar100},
    \item ImageNet-100: Fig.~\ref{fig:traj-imagenet100}.
\end{itemize}

\begin{figure}[p]
    \centering
    \includegraphics[width=\textwidth]{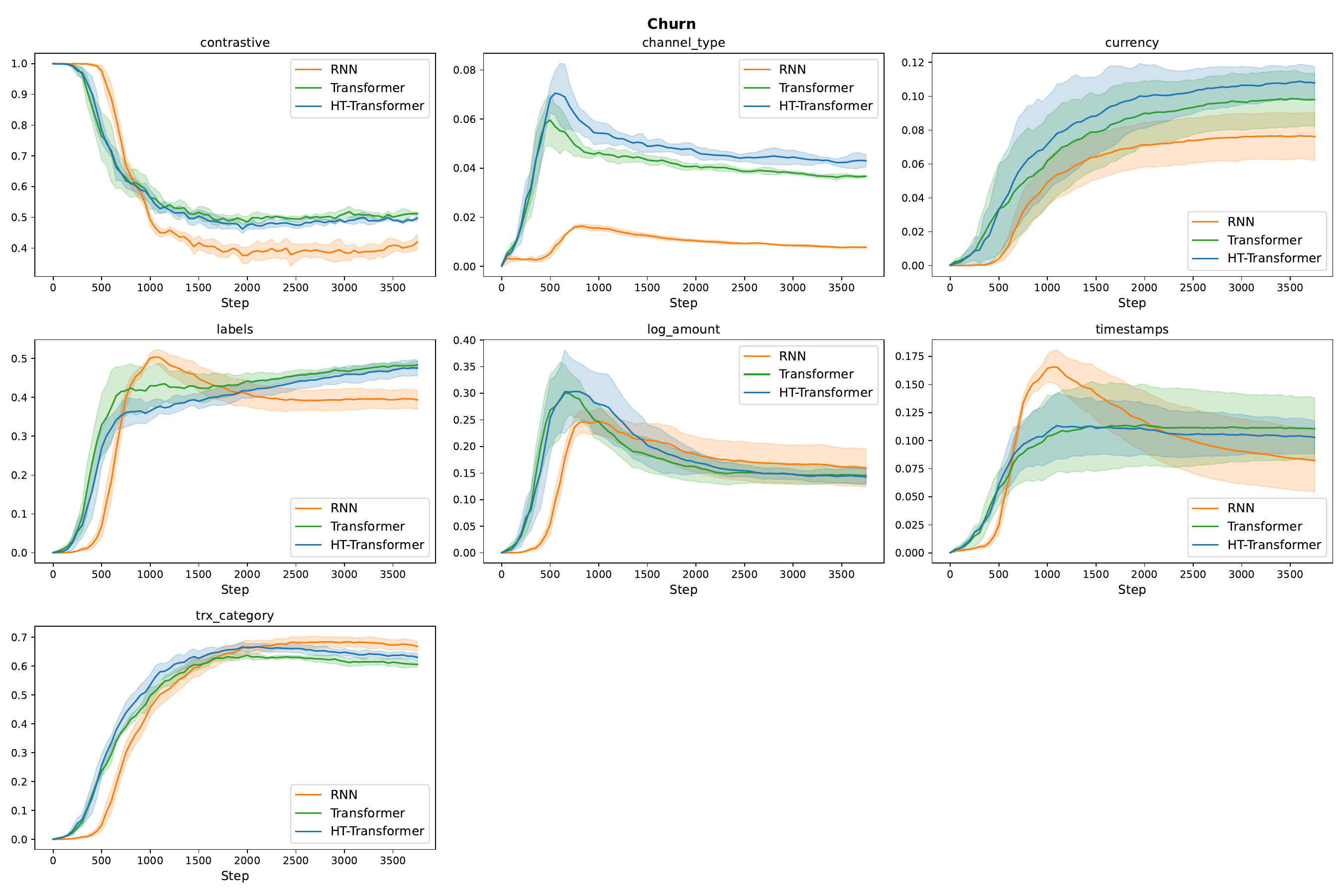}
    \caption{Weights trajectories for the Churn dataset.}
    \label{fig:traj-churn}
\end{figure}

\begin{figure}[p]
    \centering
    \includegraphics[width=\textwidth]{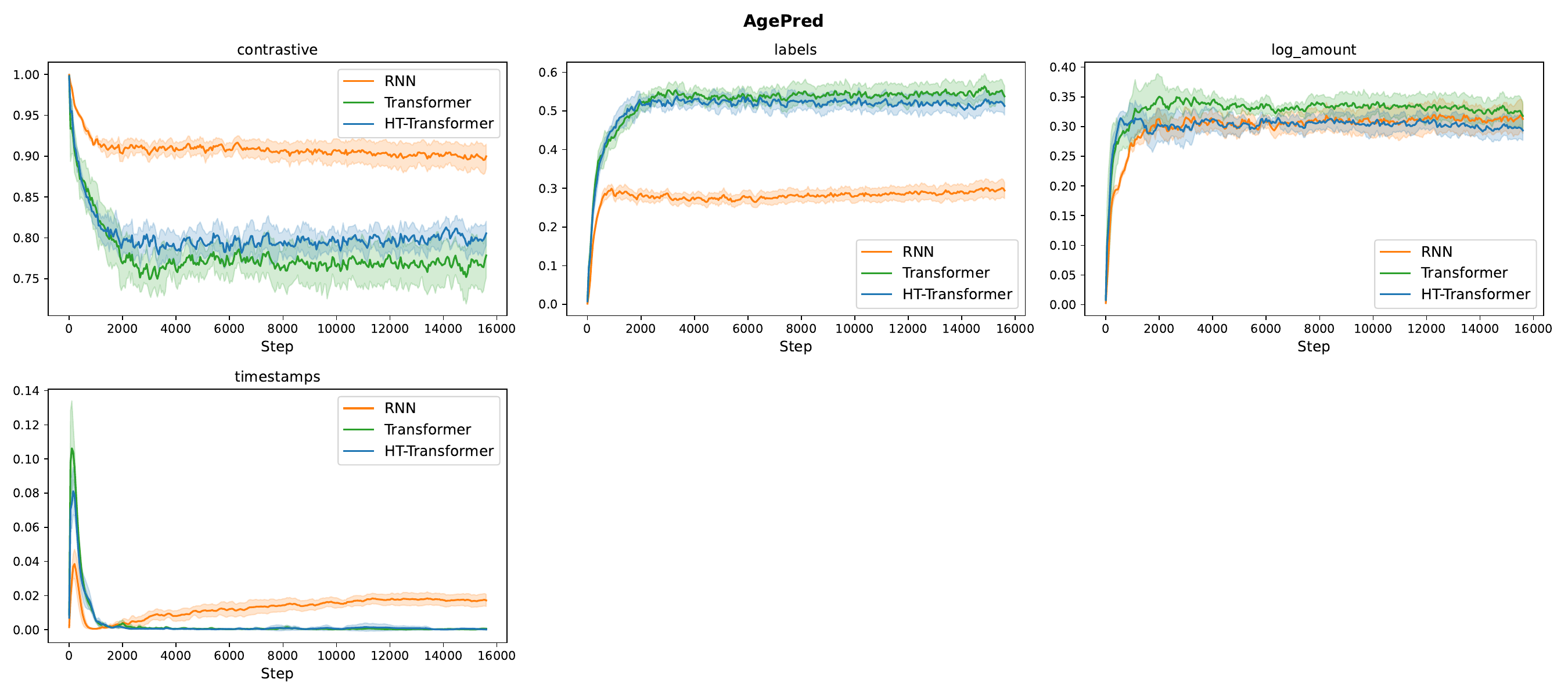}
    \caption{Weights trajectories for the AgePred dataset.}
    \label{fig:traj-agepred}
\end{figure}

\begin{figure}[p]
    \centering
    \includegraphics[width=\textwidth]{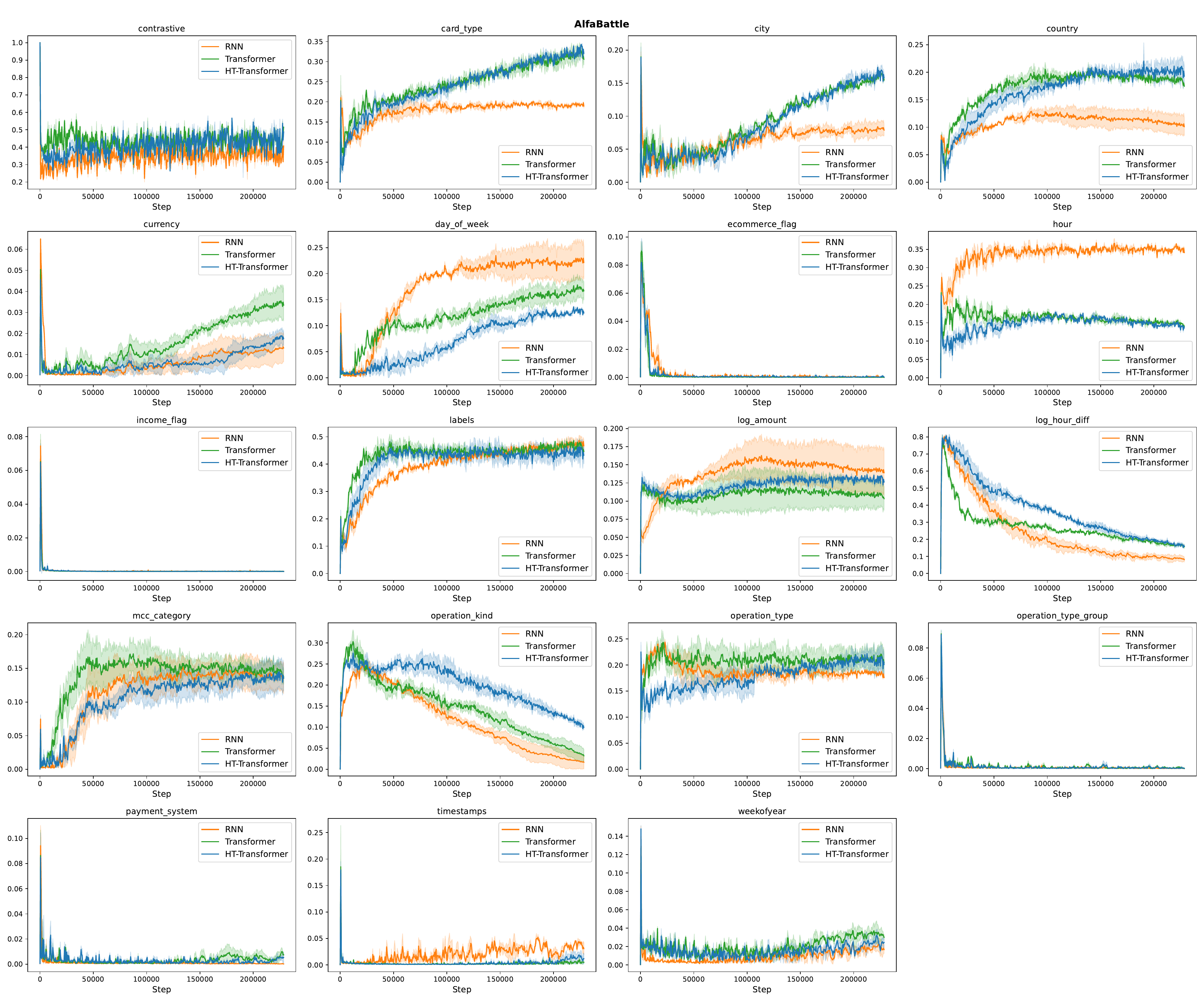}
    \caption{Weights trajectories for the AlfaBattle dataset.}
    \label{fig:traj-alfabattle}
\end{figure}

\begin{figure}[p]
    \centering
    \includegraphics[width=\textwidth]{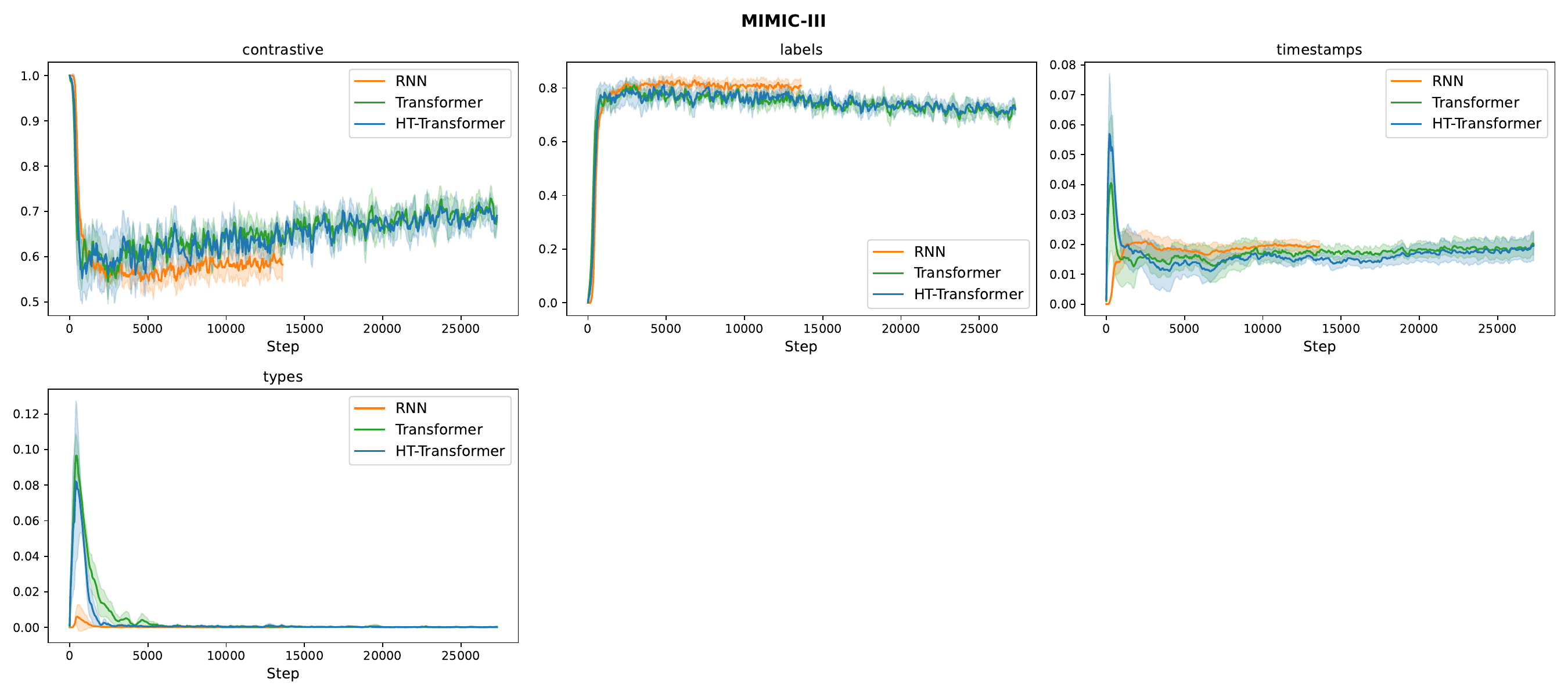}
    \caption{Weights trajectories for the MIMIC-III dataset.}
    \label{fig:traj-mimiciii}
\end{figure}

\begin{figure}[p]
    \centering
    \includegraphics[width=\textwidth]{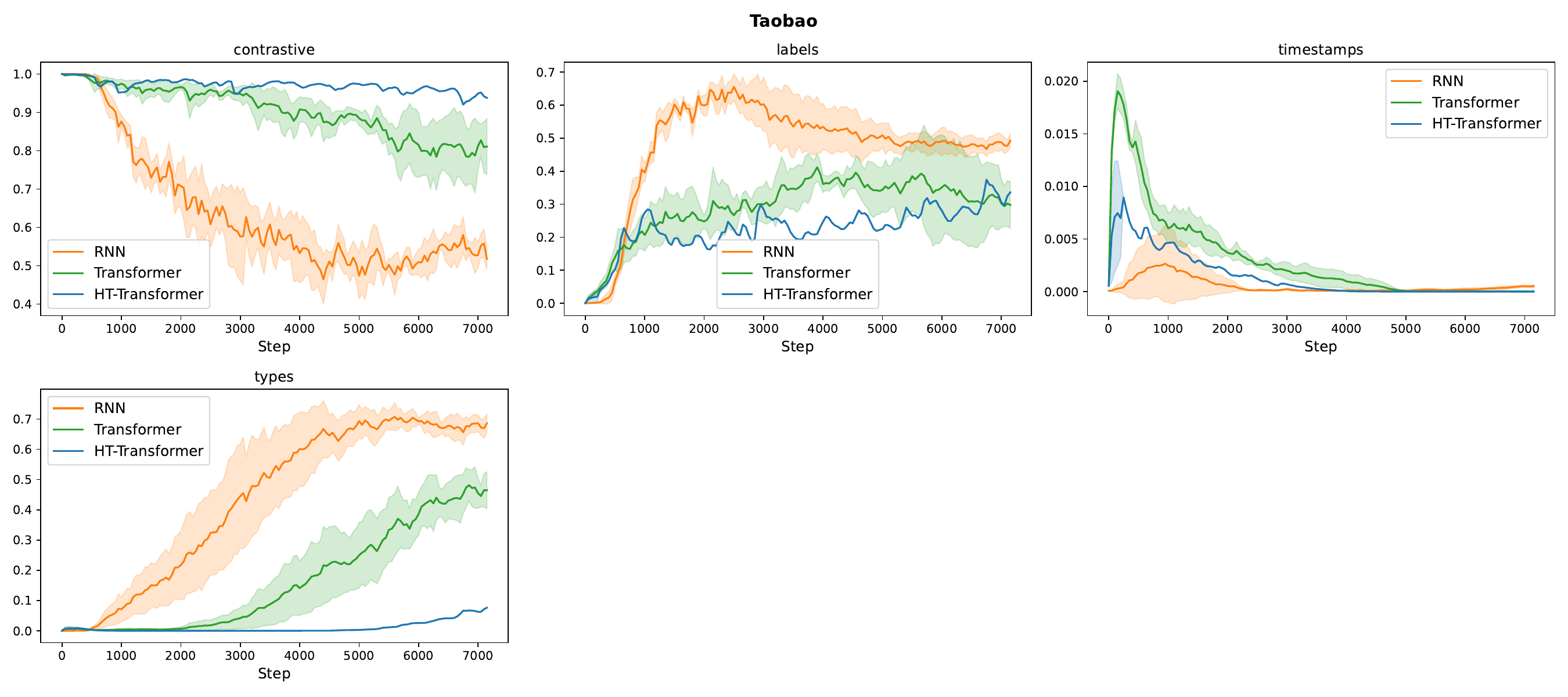}
    \caption{Weights trajectories for the Taobao dataset.}
    \label{fig:traj-taobao}
\end{figure}

\begin{figure}[p]
    \centering
    \includegraphics[width=\textwidth]{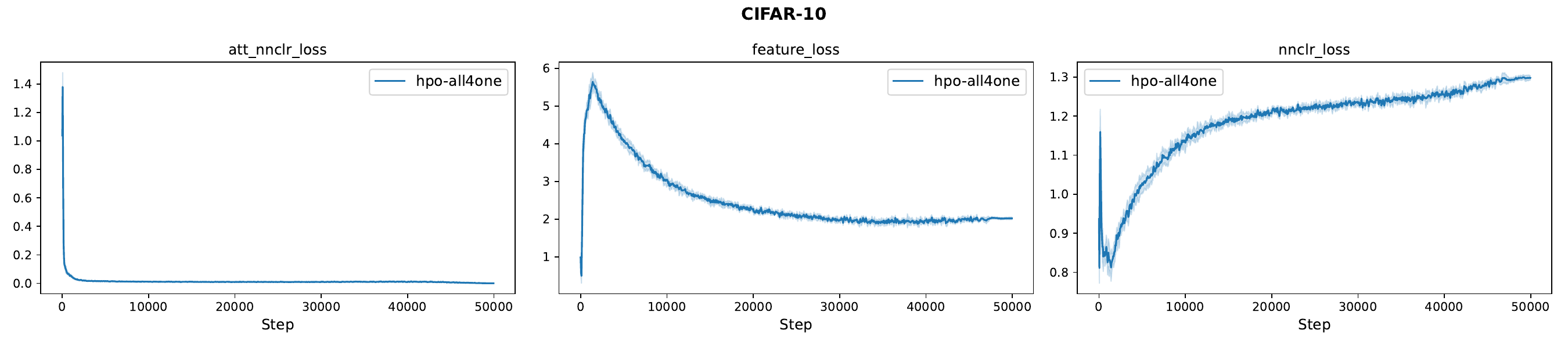}
    \caption{Weights trajectories for the CIFAR-10 dataset.}
    \label{fig:traj-cifar10}
\end{figure}

\begin{figure}[p]
    \centering
    \includegraphics[width=\textwidth]{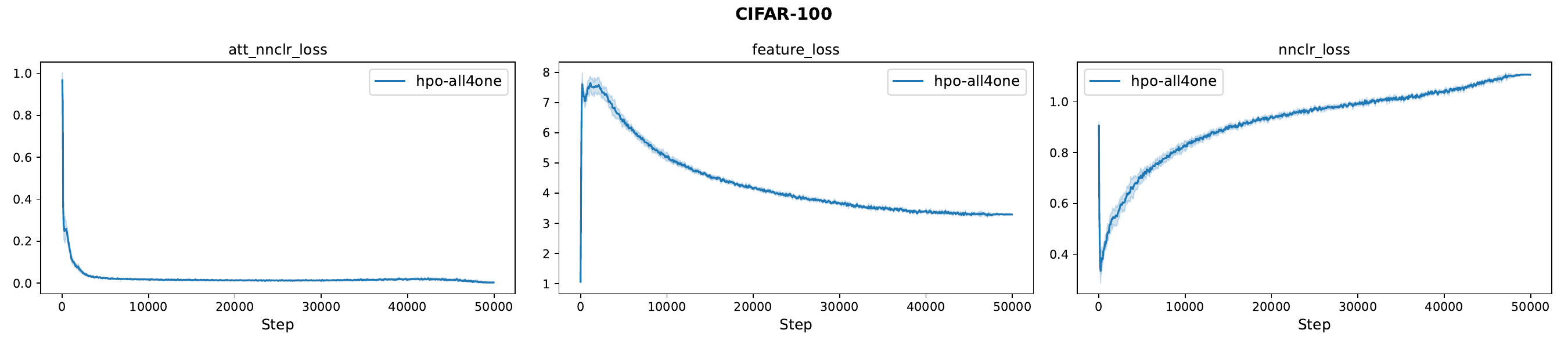}
    \caption{Weights trajectories for the CIFAR-100 dataset.}
    \label{fig:traj-cifar100}
\end{figure}

\begin{figure}[p]
    \centering
    \includegraphics[width=\textwidth]{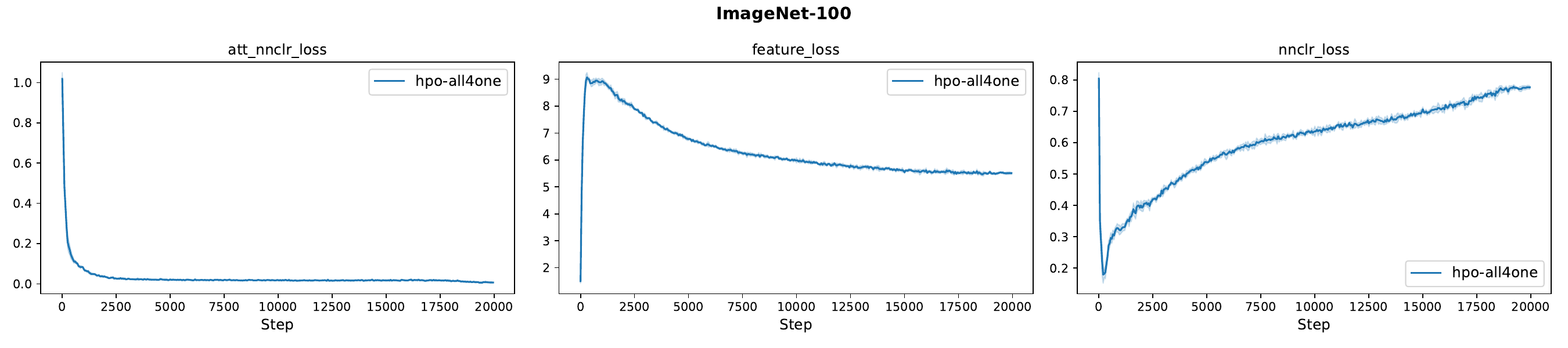}
    \caption{Weights trajectories for the ImageNet-100 dataset.}
    \label{fig:traj-imagenet100}
\end{figure}

\end{document}